\documentclass{article}

\PassOptionsToPackage{numbers}{natbib}

\usepackage[preprint]{neurips_2026}

\usepackage{amsmath}
\usepackage[utf8]{inputenc} 
\usepackage[T1]{fontenc}    
\usepackage{hyperref}       
\usepackage{url}            
\usepackage{booktabs}       
\usepackage{amsfonts}       
\usepackage{nicefrac}       
\usepackage{microtype}      
\usepackage{xcolor}         
\usepackage{subcaption}

\usepackage[english]{babel}
\let\cite\citep

\hypersetup{
  colorlinks,
  allcolors={blue!50!black}
}

\usepackage{amssymb}
\usepackage{mathtools}
\usepackage{amsthm}
\usepackage{float}
\usepackage{algorithm}
\usepackage{algorithmic}
\usepackage{multirow}

\theoremstyle{plain}
\newtheorem{theorem}{Theorem}[section]
\newtheorem*{theorem*}{Theorem}

\theoremstyle{definition}
\newtheorem{definition}[theorem]{Definition}

\theoremstyle{remark}

\newcommand{\prob}{\mathbb{P}}
\newcommand{\basematrix}{X}
\newcommand{\score}{\hat{\mu}}
\newcommand{\truescore}{\mu}

\title{Rank Intervals for Leaderboards: \\A Hierarchical Framework for Model Evaluation}

\author{%
  Bitya Neuhof \\
  Department of Statistics and Data Science\\
  The Hebrew University of Jerusalem \\
  \texttt{bitya.neuhof@mail.huji.ac.il} \\
  \And
  Yuval Benjamini \\
  Department of Statistics and Data Science \\
  The Hebrew University of Jerusalem \\
  \texttt{yuval.benjamini@mail.huji.ac.il} \\
}

\begin{document}

\maketitle

\begin{abstract}
    Pretrained models are often evaluated on multi-task leaderboards to measure their applicability in diverse contexts. However, current methods for aggregating performance across tasks into leaderboard-level rankings do not address the uncertainty and variability at the task level. While recent works have proposed interval-based model rankings, the principled aggregation of uncertainty from individual tasks to leaderboard-level rankings remains unaddressed, and variation in models' performance across tasks is frequently obscured. In this work, we introduce a hierarchical framework that constructs model rank intervals with statistical guarantees at both levels: task-level rank confidence intervals from pairwise comparisons, and leaderboard-level rank prediction intervals using a conformal approach. This enables reliable quantification of model rank for each observed task and for new potential tasks. Experiments on simulated data and the TabArena and PromptEval (MMLU) benchmarks show that our method yields statistically valid and informative intervals, enabling reliable, uncertainty-aware model ranking on leaderboards.
\end{abstract}

\section{Introduction}

The practice of model ranking has been formalized through the establishment of public leaderboards, such as the Hugging Face Open LLM Leaderboard~\cite{open-llm-leaderboard-v2}, which provides an overview of models' rankings on diverse tasks, and Chatbot Arena~\cite{chiang2024chatbot}, which summarizes pairwise preferences between models. Leaderboards have become the primary mechanism for researchers to report empirical progress, with rankings indicating advancement and encouraging competition. When choosing a model, it is important to consider that performance can vary significantly across tasks, especially for large pretrained models (such as LLMs) built for broad use instead of excelling at one task.

Leaderboards typically aggregate model performance across tasks using methods like weighted averages or by modeling win rates with Bradley-Terry (BT)~\cite{bradley1952rank}. These summary metrics, often presented as rankings, help compare and select models. However, they may obscure substantial variation in performance across tasks, thereby missing important model–task interactions. Additionally, both the estimation of single-task performance and the aggregation process introduce uncertainty to the rankings, but this uncertainty is rarely made explicit. As leaderboards expand to include more tasks, models, and metrics, making meaningful comparisons becomes even more difficult.

While recent work has introduced statistical methods to analyze pairwise comparisons~\cite{ameli2024statistical, valdeira2025ranking} or aggregate scores~\cite{longjohn2025statistical, ackerman2025statistical}, these approaches often overlook the direct quantification of uncertainty in the leaderboard rankings~\cite{foucart2025ranking}. Specifically, they lack a unified mechanism to propagate the variability observed at the individual task level~\cite{carmona2025towards, geburek2024lmems} into the ranking in the leaderboard level. This absence of guarantees makes it difficult to determine whether differences in model rankings are consistent across tasks or merely artifacts of the task variability~\cite{mishra2021robust, heineman2025signal}. To address these limitations and systematically quantify ranking uncertainty, we introduce a hierarchical ranking framework aligned with standard benchmarking pipelines. At the task level, models are compared pairwise to construct rank confidence intervals (CIs) for each model on each task, representing the plausible range of true rankings for that model on the given task. We then aggregate these task-level rank CIs to obtain leaderboard-level rank prediction intervals (PIs), which reflect the expected rank range for a model on a new, unseen task drawn from the same distribution. Thus, the two levels answer distinct inferential questions: task-level CIs quantify uncertainty for specific tasks, while leaderboard-level PIs summarize overall ranking variability and help predict model performance on future tasks.

Our contributions are as follows: (1) We introduce a hierarchical uncertainty framework for leaderboard rankings that accounts for both task variability and model correlations. (2) We propose a novel conformal method to compute leaderboard rank PIs by aggregating task-level CIs. (3) We empirically validate our framework using simulated leaderboards and demonstrate its effectiveness in comparing machine learning (ML) models on tabular data (TabArena~\cite{erickson2025tabarena}) and LLMs on language understanding tasks (MMLU~\cite{hendryckstest2021}). (4) We highlight ranking variability and recommend visualizations and measures that improve transparency and interpretability, enabling users to distinguish whether a model's poor ranking results from consistently lower performance or from high variability across tasks. To facilitate reproducibility and future research, we release our \href{https://github.com/BityaNeuhof/leaderboard-rank-intervals}{ranking framework and evaluation code}.

\section{Task-level model ranking}\label{sec:task_ranking}

\subsection{Terminology and definitions}

In this section, we describe a ranking method for comparing the performance of all models on a single task. Suppose we have $M$ candidate models, $c_1, \dots, c_M$, each representing a fitted statistical or ML model designed to perform a specific task (such as regression, classification, or generation). 
For a task $t_b$, we define a corresponding \emph{base values matrix} $\mathbf{\basematrix}^b \in \mathbb{R}^{n_b \times M}$, where $n_b$ denotes the size of the task $t_b$, since tasks may vary in size. Each entry $\basematrix^b_{ij}$ in this matrix corresponds to the evaluation of model $c_j$ for a specific minimal data unit. Depending on the context, a base value could be a performance value for an observation in a dataset, or the score on a cross-validation fold. We assume that higher scores indicate better performance.
From $\mathbf{\basematrix}^b$, we calculate the mean performance vector $\mathbf{\score}^b = \{ \score^b_1, \ldots, \score^b_M \}$ for each model $c_j$ on task $t_b$, and the estimated covariance matrix $\hat{\Sigma^b}$, which captures both the variances of models and the covariances between models.

Because evaluation metrics may use different numerical scales, model comparisons are typically based on rankings derived from these scores rather than the raw values themselves~\cite{demvsar2006statistical, longjohn2025statistical}. The observed ranks of the models for task $t_b$, denoted as  $\mathbf{\hat{r}}^b = (\hat{r}^b_1, \ldots, \hat{r}^b_M)$ with $\hat{r}^b_j \in \{1, \ldots, M\}$, are determined by ordering the mean performance scores $\mathbf{\score}^b$. The model with the highest mean score receives rank $M$, while the lowest receives rank $1$. 

The observed ranks are uncertain, and this uncertainty arises from the estimation noise in the observed performance scores~\cite{rising2021uncertainty}. Let $\truescore^b_j$ represent the \emph{true performance score} of model $c_j$ on task $t_b$. The observed performance score $\score^b_j$ is considered an unbiased but noisy estimate of $\truescore^b_j$. Our objective is to understand how this estimation noise influences the resulting model rankings. Unlike the observed (noisy) ranks, we define the \emph{true ranks} $\mathbf{r^b} = (r^b_1, \ldots, r^b_M)$ based on the vector of true scores $(\truescore^b_1, \ldots, \truescore^b_M)$. To model potential ties in the true ranks, we follow the definition of rank set from~\citet{al2022simultaneous}.

\begin{definition}(Rank Set)
\label{def:rankset}
    Define the lower rank of $\truescore^b_j$ as $
        l^b_j = 1 + \#\{k: \truescore^b_j > \truescore^b_k, \, j \neq k\}$
    and the upper rank as $
        u^b_j = M - \#\{k: \truescore^b_j < \truescore^b_k, \, j \neq k\} $. 
    Then the rank set of $\truescore^b_j$ is $r^b_j = \{l^b_j, \ldots, u^b_j\}$.
\end{definition}

When there are no ties in the true performance scores, the lower and upper ranks are equal, so the rank set contains only one value, aligned with the standard notion of a unique rank. Throughout this paper, we refer to the true ranks (denoted as $\mathbf{r^b}$) as the collection of rank sets defined above. Although noise makes exact ties in observed performance scores rare, assuming ties in true ranks is useful when models are equivalent, meaning their true performance scores are indistinguishable given the noise.

\subsection{Task-Level rank confidence intervals}

The target of inference is the set of true ranks for all models with respect to task $t_b$. 
Let $([L^b_1, U^b_1], \ldots, [L^b_M, U^b_M])$ denote the rank intervals of the true ranks of $M$ models on task $t_b$.
\begin{definition}(Marginal Coverage)
An interval of ranks for model $c_j$ is said to have \emph{marginal coverage} rate of $1 - \alpha_{tsk}$ on task $t_b$ if the interval is a valid CI for its corresponding true rank:
\begin{equation}\label{eq:marginal_cover}
        \begin{aligned}
        \prob \big( r^b_j \subseteq [L^b_j, U^b_j]\big) \geq 1 - \alpha_{tsk}.
        \end{aligned}
    \end{equation}
\end{definition}
We aim to construct task-level rank CIs with marginal coverage for all models.
Note that marginal coverage is not sufficient to support selection after ranking~\cite{benjamini2005false}.

\subsection{Constructing task-level rank confidence intervals}

Most methods for constructing rank CIs rely on pairwise location tests~\cite{holm2013confidence, al2022simultaneous}, and adjusted coverage rate to control ranking error at the model level or across all models.
In a pairwise test, for each pair of models $c_j$ and $c_k$ on a task $t_b$, we test two one-sided hypotheses:
\begin{equation}\label{eq:hypotheses}
    \begin{aligned}
        &H_{jk;0}^b: \truescore^b_j \geq \truescore^b_k \text{ vs } H_{jk;1}^b: \truescore^b_j < \truescore^b_k, \text{ and }
        &H_{kj;0}^b: \truescore^b_k \geq \truescore^b_j \text{ vs } H_{kj;1}^b: \truescore^b_k < \truescore^b_j.
    \end{aligned}
\end{equation}
The result of each hypothesis test is a p-value $p_{jk}^b$ for every ordered pair of models $(c_j, c_k)$ on task $t_b$. For $M$ models, there are $M(M-1)$ such pairwise comparisons. The hypotheses are directional.

Rank CIs for the true ranks are obtained by \emph{counting} rejections of hypotheses after controlling for family wise error  (FWER) for decisions associated with a single model at a time \cite{holm2013confidence}. Algorithm \ref{alg:task_ci} summarizes the construction of marginal
 task-level rank CIs from the pairwise comparison.

\begin{algorithm}
\caption{Task-Level Rank CI}\label{alg:task_ci}
\begin{algorithmic}[1]
\STATE {\bfseries Input:}
\STATE Pairwise p-values $\{p_{jk}^b : j \neq k\}$;
\STATE FWER-controlling procedure $\phi$ (e.g., Holm), and miscoverage rate $\alpha_{tsk}$;
\FOR{$j = 1,\dots,M$}
    \STATE {\bfseries Lower rank bound:}
    \STATE Apply $\phi$ to $\{p_{kj}^b : k \neq j\}$ at level $\alpha_{tsk}/2$,
           obtaining rejection vector $R_{j,L} \in \{0,1\}^{M-1}$.
    \STATE $L_j^b \gets 1 + \sum_{k=1}^{M-1} R_{j,L}(k)$
    \COMMENT{Count models significantly worse than $c_j$}

    \STATE {\bfseries Upper rank bound:}
    \STATE Apply $\phi$ to $\{p_{jk}^b : k \neq j\}$ at level $\alpha_{tsk}/2$,
           obtaining rejection vector $R_{j,U} \in \{0,1\}^{M-1}$.
    \STATE $U_j^b \gets M - \sum_{k=1}^{M-1} R_{j,U}(k)$
    \COMMENT{Count models significantly better than $c_j$}
\ENDFOR
\STATE Return $\big\{[L_1^b, U_1^b], \ldots , [L_M^b, U_M^b]\big\}$
\end{algorithmic}
\end{algorithm}

\begin{theorem}\label{thm:task}
[Holm  2013] The set of rank CIs $[L^b_1, U^b_1], \ldots, [L^b_M, U^b_M]$ constructed by Algorithm~\ref{alg:task_ci}, has marginal coverage rate of $(1-\alpha_{tsk})$.
\end{theorem}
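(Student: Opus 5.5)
Fix a model $c_j$ and a task $t_b$. The plan is to show that the event $\{r^b_j \not\subseteq [L^b_j, U^b_j]\}$ is contained in the union of two error events, each with probability at most $\alpha_{tsk}/2$: a false rejection among the hypotheses $\{H^b_{kj;0}: k\neq j\}$, and a false rejection among $\{H^b_{jk;0}: k\neq j\}$. A union bound then gives $\prob(r^b_j \subseteq [L^b_j,U^b_j]) \geq 1-\alpha_{tsk}$. We assume each $p^b_{jk}$ is a valid p-value for $H^b_{jk;0}$, and that $\phi$ controls the FWER (in the strong sense, as Holm does) at its nominal level.

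\textbf{Step 1 (deterministic containment).} Since $r^b_j=\{l^b_j,\dots,u^b_j\}$, we have $r^b_j\subseteq[L^b_j,U^b_j]$ if and only if $L^b_j\le l^b_j$ and $u^b_j\le U^b_j$.

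Consider the lower bound first. By Algorithm~\ref{alg:task_ci}, $L^b_j-1$ is the number of rejected hypotheses $H^b_{kj;0}:\truescore^b_k\ge\truescore^b_j$. By Definition~\ref{def:rankset}, $l^b_j-1=\#\{k:\truescore^b_k<\truescore^b_j\}$, which is exactly the number of \emph{false} nulls in this family. So if no true null $H^b_{kj;0}$ is rejected, every rejection corresponds to some $k$ with $\truescore^b_k<\truescore^b_j$. Hence $L^b_j\le l^b_j$.

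The upper bound is symmetric. $M-U^b_j$ counts rejections of $H^b_{jk;0}:\truescore^b_j\ge\truescore^b_k$, and the false nulls in that family are exactly the $k$ with $\truescore^b_k>\truescore^b_j$, of which there are $M-u^b_j$. So if no true null $H^b_{jk;0}$ is rejected, $U^b_j\ge u^b_j$.

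\textbf{Step 2 (probability).} Let $A_L$ be the event that $\phi$, applied to $\{p^b_{kj}\}_{k\ne j}$, rejects some true null. Let $A_U$ be the analogous event for $\{p^b_{jk}\}_{k\ne j}$. Strong FWER control at level $\alpha_{tsk}/2$ gives $\prob(A_L)\le\alpha_{tsk}/2$ and $\prob(A_U)\le\alpha_{tsk}/2$, whatever the configuration of true and false nulls. This holds in particular under the unknown true configuration determined by $\truescore^b$. Combining with Step 1,
\[
\prob\big(r^b_j\not\subseteq[L^b_j,U^b_j]\big)\le\prob(A_L\cup A_U)\le\alpha_{tsk}.
\]
No independence between the two families is needed, since only a union bound is used.

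\textbf{Main obstacle.} The crux is the counting argument in Step 1 in the presence of ties. A tied pair $\truescore^b_k=\truescore^b_j$ makes both $H^b_{kj;0}$ and $H^b_{jk;0}$ true nulls. Definition~\ref{def:rankset} excludes such pairs from both the $l^b_j$ and $u^b_j$ counts, so Step 1 still goes through, but this case must be checked explicitly. The other point to verify is that strong, rather than weak, FWER control is what Step 2 requires. Holm's procedure provides strong control for arbitrary dependence among the p-values, which is why it suffices here.
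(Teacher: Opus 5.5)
Your proof is correct. It is essentially the argument behind Theorem~1 of Holm (2013), which the paper cites rather than reproduces: on the event that no true null is rejected in either one-sided family, the rejection counts bracket the rank set $\{l^b_j,\dots,u^b_j\}$, and strong FWER control at level $\alpha_{tsk}/2$ per family plus a union bound gives coverage $1-\alpha_{tsk}$. Your explicit check that a tied pair $\mu^b_k=\mu^b_j$ is a true null in both families, and is excluded from both the $l^b_j$ and $u^b_j$ counts, fills in a detail the paper leaves implicit when it combines Holm's result with the rank-set definition of Al Mohamad et al.
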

For the formal statement and proof, see Theorem 1 in~\citet{holm2013confidence}. 
In the analyses presented in this paper, we use a paired t-test for significance testing and Holm's procedure for multiplicity control.

Other methods for constructing valid rank CIs, which provide either marginal or stronger guarantees of simultaneous coverage, can also be applied at the task level to generate rank intervals for each task. Examples include the methods of~\citet{ al2021simultaneous, al2022simultaneous, chetverikov2024csranks, chandra2025finite, valdeira2025ranking}. We discuss the implementation choices in Appendix~\ref{app:implementation_details}. Our experiments (Appendix~\ref{app:synthetic_add}) show that bootstrap uncertainty estimates do not produce valid CIs in the presence of ties.
Our ranking method compares models' performance scores, evaluating each model on the same data. To construct rank CIs, we assume a score and a measure of variability for each model or pair of models. Thus, our framework applies to pairwise preference data, such as Chatbot Arena~\cite{chiang2024chatbot}. See Appendix~\ref{app:pairwise_preferences} for details and an example.
\section{Model ranking across tasks}\label{sec:leaderboard_ranking}
In this section, we describe how to aggregate task-level rank CIs to infer model rankings at the \emph{leaderboard level}. 
We introduce a novel leaderboard-level interval for each model, based on the quantiles of the task-level rank CIs. The leaderboard-level interval is a PI, in the sense that it guarantees a specified coverage-probability for the true ranks of new tasks.

\subsection{Terminology and definitions}
Let $t_1, \dots, t_N$ be a set of $N$ distinct tasks, with all models $c_1, \dots, c_M$ evaluated on each task. Let $(1 - \alpha_{ldb})$ denote the desired coverage rate. The objective is to obtain an interval that would cover the true rank of a new task with high probability.

\begin{definition}(Marginal Task Coverage)
Assume $t_1, \ldots, t_N \sim \mathcal{P}$, where $\mathcal{P}$ is a distribution over tasks, and let $[L_j, U_j]$ denote the leaderboard-level rank PI for model $c_j$, estimated from the $N$ observed tasks. We say that $[L_j, U_j]$ maintains \emph{marginal task coverage} for model $c_j$ if:
\begin{equation}\label{eq:marginal_task_coverage}
    \begin{aligned}
        &\prob_{t_1,...,t_N,t^*\sim \mathcal{P}} \big(r_j(t^*) \subseteq [L_j, U_j]\big)
        &\geq 1 - \alpha.
    \end{aligned}
\end{equation}
Here, $t^*$ represents a new, unseen task, and $r_j(t^*)$ is the true rank of model $c_j$ on this task.
\end{definition}
\textbf{Interpretation:} Fix a model $c_j$, and draw a set of $N+1$ tasks. If you obtain the interval from tasks $t_1,...,t_N$, then it will cover the true rank of $t^*$ with probability of at least $(1 - \alpha).$

The main formal assumption for obtaining intervals with marginal task coverage is that new tasks are drawn from the same distribution as the observed tasks used to estimate the leaderboard-level rank PIs. This assumption is common in conformal inference. In fact, the leaderboard-level rank PIs we propose can be viewed as conformal PIs constructed using the task-level rank CIs~\cite{liu2025conformalintervals}.

\subsection{Leaderboard-level rank prediction intervals}

Let $\mathcal{I}_{M \times N}$ denote the collection of $N$ sets of ($1-\alpha_{tsk}$) task-level rank CIs for the true ranks of $M$ models, with one set per task:
\begin{equation}\label{eq:ci_collection}
    \begin{aligned}
        \mathcal{I}_{M \times N} \coloneq \{([L^1_1, U^1_1], \ldots, [L^1_M, U^1_M]), \ldots,
        ([L^N_1, U^N_1], \ldots, [L^N_M, U^N_M])\}.
    \end{aligned}
\end{equation}
To aggregate task-level rank CIs into leaderboard-level rank PIs while retaining finite-sample guarantees, we use tools from conformal inference, which provide distribution-free, model-agnostic methods for calibrating uncertainty sets~\cite{vovk2005algorithmic}.

For model $c_j$, let $\mathbf{L}_j = sorted(L^1_j, \ldots, L^N_j )$ and $\mathbf{U}_j=sorted(U^1_j, \ldots, U^N_j)$ denote the sorted vectors of the lower and upper bounds from the $N$ task-level rank CIs, so that $\mathbf{L}_j[k]$ denote the $k$-th largest lower bound. We also set $\mathbf{L}_j[0] = 1$ and $\mathbf{U}_j[N+1] = M$. Following~\citet{lei2018distribution}, we define the leaderboard-level rank PI for model $c_j$ using the inflated $\frac{\alpha_{ldb}}{2}$ quantile of the lower bounds and the inflated $1 - \frac{\alpha_{ldb}}{2}$ quantile of the upper bounds. 
\begin{definition}(Interval Quantiles)\label{def:quantile}
Denote:
\begin{equation}\label{eq:quantile_ind}
        \begin{aligned}
        k_l = \left\lfloor(N+1)\frac{\alpha_{ldb}}{2} \right\rfloor, \quad
        k_u = \left\lceil (N+1)(1-\frac{\alpha_{ldb}}{2}) \right\rceil.
    \end{aligned}
\end{equation}
Then:
\begin{equation}\label{eq:quantile_val}
    \begin{aligned}
        L_j = \mathbf{L}_j\big[k_l\big], \quad
        U_j = \mathbf{U}_j\big[k_u\big].
    \end{aligned}
\end{equation}
Here, $\mathbf{L}_j[k_l]$ and $\mathbf{U}_j[k_u]$ denote the $k_l$-th and $k_u$-th elements from the sorted vectors of lower and upper bounds.
\end{definition}

Algorithm~\ref{alg:quantile} aggregates task-level rank CIs into leaderboard-level rank PIs using quantiles. For each model $c_j$, the algorithm selects quantiles of the lower and upper interval bounds to ensure the interval $[L_j, U_j]$ achieves the target coverage $(1- \alpha)$ across tasks, for $\alpha = \alpha_{tsk} + \alpha_{ldb}$. To avoid degenerate leaderboard-level rank PIs, $\alpha_{ldb}$ should not be too small; setting $\alpha_{ldb} \geq \frac{2}{N+1}$ ensures $k_l \geq 1$ and $k_u \leq N$. To prevent trivial intervals $[1, M]$, we require at least $N \geq 3$ tasks.

\begin{algorithm}[ht]
  \caption{Quantile Merge}
  \label{alg:quantile}
  \begin{algorithmic}[1]
    \STATE {\bfseries Input:} \\
    \qquad Collection of task-level rank CIs $\mathcal{I}_{M \times N}, \quad N \geq 3$; \\
    \qquad Leaderboard-level miscoverage rate $\alpha_{ldb} \in \left(\frac{2}{N+1}, 1\right)$.
    \STATE $k_l \gets \left\lfloor(N+1)\frac{\alpha_{ldb}}{2} \right\rfloor$; $k_u \gets \left\lceil (N+1)(1-\frac{\alpha_{ldb}}{2}) \right\rceil$
    \FOR{$j=1$ {\bfseries to} $M$}
        \STATE $\mathbf{L}_j \gets sorted(L^1_j, \ldots, L^N_j )$; $\mathbf{U}_j \gets sorted(U^1_j, \ldots, U^N_j)$
        \STATE $L_j=\mathbf{L}_j[k_l]$; $U_j=\mathbf{U}_j[k_u]$
    \ENDFOR
    \STATE Return $\big\{[L_1, U_1], \ldots, [L_M, U_M]\big\}$
  \end{algorithmic}
\end{algorithm}

\begin{theorem}\label{thm:leaderboard}
Assume the observed tasks $t^1, \ldots, t^N$ are independently sampled from the distribution $\mathcal{P}$. Construct the collection $\mathcal{I}_{M \times N}$ of task-level rank CIs, each with marginal coverage $1 - \alpha_{tsk}$.
Now, consider a new independent task $t^* \sim \mathcal{P}$, with true ranks $r_1^*, \ldots, r_M^*$.
The leaderboard-level rank PIs $([L_1, U_1], \ldots, [L_M, U_M])$ produced by Algorithm~\ref{alg:quantile} achieve marginal task coverage of $1 - \alpha$, for $\alpha = \alpha_{ldb} + \alpha_{tsk}$. That is, for each model $c_j$,
    \begin{equation}\label{eq:thm_leaderboard}
        \begin{aligned}
            \prob \big( r_j^* \subseteq [L_j, U_j]\big)
            \geq 1 - \alpha.
        \end{aligned}
    \end{equation}
\end{theorem}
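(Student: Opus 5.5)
We will prove the bound with a union bound that splits the miscoverage into a task-level part ($\alpha_{tsk}$) and a conformal part ($\alpha_{ldb}$). To do this, imagine also running Algorithm~\ref{alg:task_ci} on the new task $t^*$, which gives a rank CI $[L^*_j, U^*_j]$ for model $c_j$. No data from $t^*$ is used to build $[L_j,U_j]$; the CI on $t^*$ is only an analytical device. Since the CI on $t^*$ has marginal coverage (Theorem~\ref{thm:task}),
\[
\prob\big(r_j^* \not\subseteq [L^*_j, U^*_j]\big) \le \alpha_{tsk}.
\]
Suppose both $r_j^*\subseteq[L^*_j,U^*_j]$ and $[L^*_j,U^*_j]\subseteq[L_j,U_j]$ hold. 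Then $r_j^*\subseteq[L_j,U_j]$. Hence
\[
\prob\big(r_j^*\not\subseteq[L_j,U_j]\big) \le \alpha_{tsk} + \prob\big(L^*_j < L_j \text{ or } U^*_j > U_j\big),
\]
and it remains to show that the last probability is at most $\alpha_{ldb}$.

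For the conformal step, we view each task $t^b$, together with its evaluation data and the resulting CIs, as one draw from $\mathcal{P}$. The pairs $(L^b_j,U^b_j)$ for $b=1,\dots,N$ and $(L^*_j,U^*_j)$ are then $N+1$ i.i.d., hence exchangeable, random vectors. This relies on Algorithm~\ref{alg:task_ci} being applied to each task separately, with the same $\phi$ and $\alpha_{tsk}$. We split the failure event into two pieces and bound each separately.

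\textbf{Lower piece.} Recall $L_j=\mathbf{L}_j[k_l]$ with $k_l=\lfloor (N+1)\alpha_{ldb}/2\rfloor$. The event $L^*_j<\mathbf{L}_j[k_l]$ means that $L^*_j$ is strictly smaller than at least $N-k_l+1$ of the other $N$ values. Equivalently, among the $N+1$ exchangeable values, $L^*_j$ falls among the $k_l$ smallest, and does so strictly. By the standard rank argument for exchangeable variables, the probability of this is at most $k_l/(N+1)\le\alpha_{ldb}/2$. Ties only help, because we require a strict inequality, so no random tie-breaking is needed.

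\textbf{Upper piece.} Symmetrically, with $k_u=\lceil (N+1)(1-\alpha_{ldb}/2)\rceil$, the event $U^*_j>\mathbf{U}_j[k_u]$ has probability at most $(N+1-k_u)/(N+1)\le \alpha_{ldb}/2$.

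Adding the two pieces bounds the failure probability by $\alpha_{ldb}$. Combined with the first display, this gives $1-\alpha_{tsk}-\alpha_{ldb}$. The assumption $\alpha_{ldb}>2/(N+1)$ guarantees $1\le k_l$ and $k_u\le N$, so both indices are well defined; the conventions $\mathbf{L}_j[0]=1$ and $\mathbf{U}_j[N+1]=M$ cover the degenerate cases trivially.

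The main obstacle is making the exchangeability step rigorous. We must be precise about what randomness $\mathcal{P}$ generates, namely the task together with its sampled evaluation data, so that the task-level CIs are i.i.d. across tasks. We must also prove the counting bound carefully in the presence of ties, since ranks are integers and ties are frequent. For the counting bound we would use the following argument. Let $A$ be the number of indices $i\in\{1,\dots,N+1\}$ whose value is strictly less than the $k_l$-th smallest among the other $N$ values. Any such index must lie strictly below at least $N-k_l+1$ of the other values, and at most $k_l$ indices can do that, so $A\le k_l$ deterministically. By exchangeability, each index, including the one for $t^*$, meets this condition with the same probability, so that probability equals $\mathbb{E}[A]/(N+1)\le k_l/(N+1)$.
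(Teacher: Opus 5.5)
Your proof is correct and follows the same route as the paper. You construct the auxiliary task-level CI $[L^*_j,U^*_j]$ on $t^*$, union-bound the task-level miscoverage together with the events $L^*_j<L_j$ and $U^*_j>U_j$, and bound each of those two events by $\alpha_{ldb}/2$ via exchangeability. Your deterministic counting bound ($A\le k_l$, so the probability equals $\mathbb{E}[A]/(N+1)$) is a rigorous replacement for the paper's looser ``any ordering is equally likely'' step, which is imprecise when the integer bounds tie.
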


The coverage guarantee is $1 - (\alpha_{ldb} + \alpha_{tsk})$ reflecting the possibility of task-level ranking error.

\textbf{Proof Overview:} Focusing on model $c_j$, we construct a task-level rank CI $[L_j^*, U_j^*]$ for the new task $t^*$, which guarantees coverage of the true rank $r_j^{*}$ with probability at least $1-\alpha_{tsk}$. The lower bound $L_j^*$ is exchangeable with the $N$ observed lower bounds for model $c_j$, ensuring that all $N+1$ lower bounds are equally likely. Similarly, for the upper bounds, all $N+1$ values are equally likely, so $P\big(L^*_j \leq L_j \big) \leq 1 - \frac{\alpha_{ldb}}{2}$, and $P\big(U^*_j \geq U_j \big) \leq 1 - \frac{\alpha_{ldb}}{2}$. By accounting for the potential coverage errors associated with $L_j^*$ and $U_j^*$, we derive the overall coverage bound. A detailed proof is provided in Appendix~\ref{app:proof}.

Leaderboard-level rank PIs provide a summary of a model's performance across all preselected tasks, without any selection based on specific tasks. As a result, it is sufficient for task-level CIs to achieve marginal coverage to guarantee marginal coverage at the leaderboard level. Note that Algorithm~\ref{alg:quantile} outputs rank PIs, which are different than rank CIs. While both quantify ranking uncertainty, they differ in the inferential question and interpretation; see Appendix~\ref{app:interp} for details.

\section{Experiments}\label{sec:experiments}

\subsection{Synthetic data simulations}
We examine how uncertainty propagates from task-level rank CIs to leaderboard-level rank PIs, and assess Algorithm~\ref{alg:quantile}'s ability to control ranking errors through marginal coverage. We validate the framework using synthetic data that mimics a leaderboard, evaluating multiple models across tasks. This lets us control model correlations and generate identically distributed tasks. 

\paragraph{Synthetic data generation process}
We represent the true model scores as a vector, one entry per model, with $M \in \{10, 30\}$. Model correlations are encoded in a covariance matrix $\Sigma$ with equal variances and a block correlation structure. For each task (where the number of tasks $N \in \{20, 60\}$), we sample a per-task true score vector ($\mathbf{\truescore}^b$) from a normal distribution centered at the models' true scores.
Base values for each task-model pair are generated by adding noise to the true scores. Complete details of the data-generating process and simulation parameters are provided in Appendix~\ref{app:synthetic_desc}. True ranks for each task are assigned based on the true scores. We vary the number of base values ($\{10, 200\}$) to simulate both cross-validation folds and data-observation scenarios. Each parameter configuration is repeated 100 times in the simulation.

\paragraph{Ranking baselines}
At the task level, we compare the CIs constructed by Algorithm~\ref{alg:task_ci} with CIs produced by a bootstrap rank-aggregation baseline (Algorithm~\ref{alg:task_bootstrap}). At the leaderboard-level, we compare Algorithm~\ref{alg:quantile} to a baseline method that merges rank CIs by union, where $L_j = \min_{b=1}^N(L_j^b)$ and $U_j = \max_{b=1}^N(U^b_j)$. This union approach is commonly used as a baseline in conformal inference~\cite{gasparin2024merging}.

\paragraph{Evaluation}
For each parameter configuration, we evaluate both components of our framework: task-level rank CIs and leaderboard-level PIs. We assess rank intervals using two key measures: (1) \textbf{Average normalized width}, which is the average width of rank intervals normalized to fall between 0 and 1; and (2) \textbf{Coverage}, the proportion of experimental runs in which the true ranks fall within their corresponding CIs or PIs. Precise definitions for both metrics at the task and leaderboard levels are provided in Appendix~\ref{app:synthetic_desc}. Low width values correspond to narrower intervals, indicating better performance. The average observed coverage should be at least $1- (\alpha_{tsk} -\alpha_{ldb})$. Coverage values significantly above $1- \alpha_{ldb}$ indicate conservativeness.

\paragraph{Results}
We evaluate the width and coverage of rank CIs and PIs across parameter configurations, varying task-level ($\alpha_{tsk}$) and leaderboard-level ($\alpha_{ldb}$) miscoverage rates. Comparing our FWER-based task-level rank CIs to bootstrap CIs shows that bootstrap CIs fail to maintain coverage in the presence of ties (coverage with ties $0.661 \pm 0.099$, and without ties $0.984 \pm 0.018$). Therefore, we use only our task-level rank CIs (coverage with ties $0.984 \pm 0.009$, and without ties $0.999 \pm 0.001$) to obtain leaderboard-level rank PIs. See Appendix~\ref{app:synthetic_add} for detailed results. This analysis highlights the importance of using valid methods for rank CIs. Figure~\ref{fig:summary_by_alpha_N}(a) summarizes the comparison between Algorithm~\ref{alg:quantile} (quantile) and the baseline (union) method. Coverage is reported as the average for unseen tasks (Equation~\ref{eq:app_ldb_cover_unseen}) across repetitions. We also examined how the number of tasks affects PIs' width. Figure~\ref{fig:summary_by_alpha_N}(b) shows results for multiple $N$ values, assuming moderate correlation ($\rho=0.5$, $block\_size=3$) and no ties. As expected from conformal intervals, the quantile method's interval width remains constant as the number of tasks increases, while the union method's width grows. Appendix~\ref{app:synthetic_add} provides detailed tables comparing both methods under varying correlations and ties.

\begin{figure}[ht]
    \centering
    \begin{subfigure}[c]{0.49\linewidth}
        \centering 
        \includegraphics[width=0.95\linewidth]{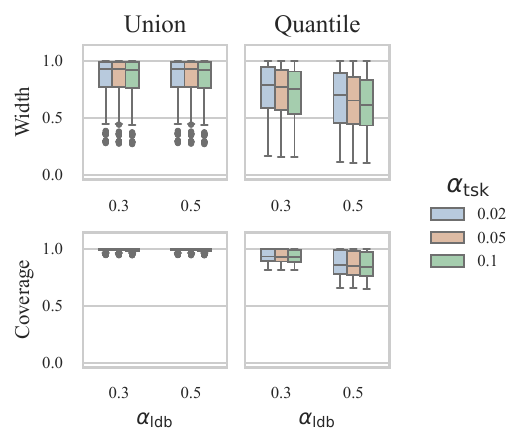}
        \caption{Summary by miscoverage rates.}
    \end{subfigure}
    \hfill 
    \begin{subfigure}[c]{0.49\linewidth}
         \centering 
         \includegraphics[width=0.98\linewidth]{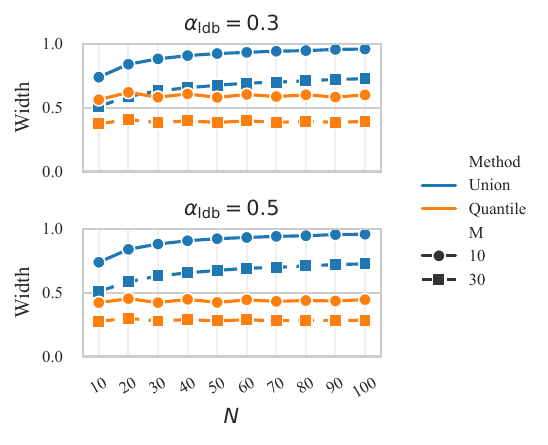}
         \caption{Impact of the number of tasks ($N$).}
     \end{subfigure}
    \caption{Hierarchical framework performance. (a) Distribution of width and coverage across simulated configurations, where the quantile method is less conservative than the union baseline. (b) Normalized width for different $N$ values; for the quantile method, width remains stable with $N$.} 
    \label{fig:summary_by_alpha_N}
\end{figure}

\subsection{TabArena}\label{sec:tabarena}

The TabArena leaderboard~\cite{erickson2025tabarena} is a benchmark providing an overview of the performance of multiple ML prediction models across various tabular datasets, including regression, classification, and multi-class classification. More details are provided in Appendix~\ref{app:tabarena_info}. We use the cross-validation scores of 44 models across 51 datasets to demonstrate how to use and interpret the output of our hierarchical ranking framework. In terms of our framework, each dataset is a task.

\paragraph{Leaderboard overview}
In the TabArena leaderboard, models are ranked by Elo scores. Although CIs are provided for the Elo scores, there is no explicit quantification of uncertainty in the rankings. In Figure~\ref{fig:tabarena_elo_pi} we present the Elo CIs (based on TabArena-v0.1 Leaderboard in table A.1 in~\citet{erickson2025tabarena}), side by side with our leaderboard-level PIs obtained by Algorithm~\ref{alg:quantile} with $\alpha_{tsk}=0.05$ and $\alpha_{ldb}=0.5$. The models are ordered by the Elo score. Visualizing the Elo CIs of all models already improves interpretability and awareness of potential ties between models, as it is clear that intervals overlap. The Elo CIs bound the average performance across all datasets. In contrast, the rank PIs quantify the expected performance of models on new tasks. To demonstrate the differences, we colored the intervals of the KNN and CAT models. While both views agree that KNN(D) is the worst, the Elo view ranks the KNN(T+E) higher than 5 other models. As shown in Figure~\ref{fig:tabarena_elo_pi}, the rank PIs indicate that all versions of KNN consistently ranked worst across all datasets. Therefore, they are expected to be the worst on new tasks as well.

\begin{figure}
    \centering
    \includegraphics[width=0.98\linewidth]{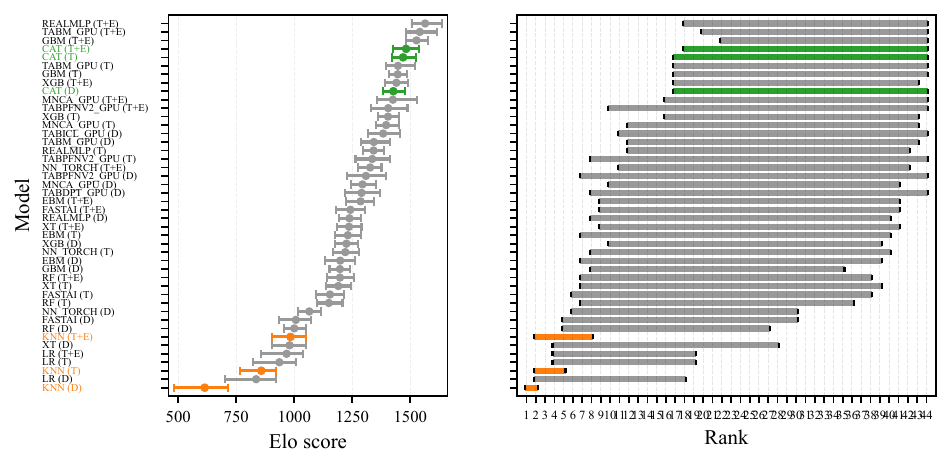}
    \caption{Elo scores with 95\% CIs (left), and leaderboard-level PIs (right) for all models. Narrow intervals indicate stable rankings across tasks. Wide intervals may result from consistently wide task-level intervals or from task heterogeneity.}
    \label{fig:tabarena_elo_pi}
\end{figure}

\begin{figure}[ht]
     \centering
     \includegraphics[width=0.98\linewidth]{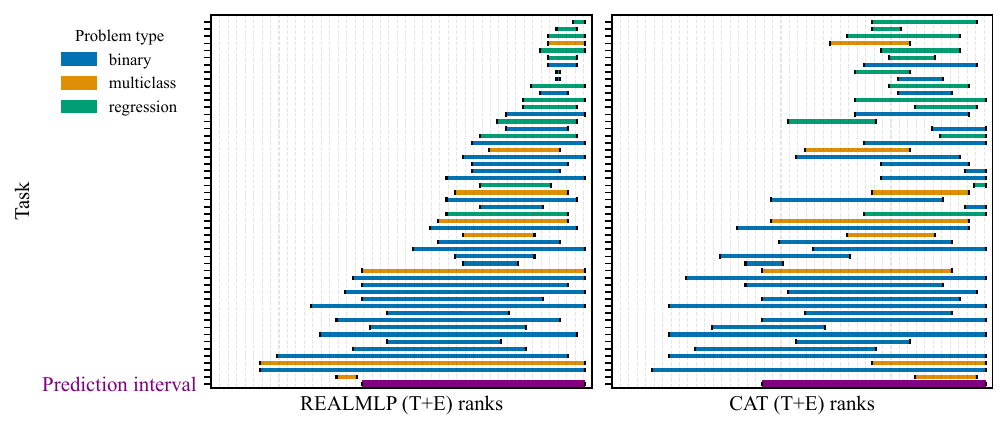}
     \caption{Dataset-level rank CIs for two models that share identical leaderboard-level PIs. While CAT (right) exhibits wide intervals across most datasets, REALMLP (left) shows highly variable performance, with narrow CIs in some datasets and wide CIs in others. This demonstrates how similar leaderboard-level PIs can mask differences in model behavior across datasets.}
     \label{fig:tabarena_task_dist}
\end{figure}

\paragraph{Dataset rank CIs distribution per-model}
Beyond leaderboard aggregation, examining dataset-level rank CIs reveals detailed insights about individual model performance. Figure~\ref{fig:tabarena_task_dist} presents dataset-level rank CIs for two top-ranked models from Figure~\ref{fig:tabarena_elo_pi}(b), colored by problem type. While both models have identical leaderboard-level PIs, their distributions of dataset-level rank CIs differ. For example, REALMLP has narrower rank CIs for regression tasks, indicating more reliable performance in this setting, while CAT shows narrower intervals for multiclass problems. Thus, if faced with a new regression dataset, REALMLP is likely to outperform CAT. Additionally, examining specific datasets can be informative: for the customer satisfaction in airline dataset~\cite{stojanovic2026evaluating}, the rank CI width is 1 for REALMLP versus 6 for CAT, demonstrating greater rank stability for REALMLP.

\subsection{Massive multitask language understanding}\label{sec:mmlu}

The Massive Multitask Language Understanding (MMLU) benchmark~\cite{hendryckstest2021} evaluates the knowledge and problem-solving capabilities of LLMs in different subjects. We utilize the PromptEval version of this benchmark \cite{polo2024efficient}, which provides a correctness matrix for 100 prompt variations across 15 models and 57 subjects, with at least 100 questions per subject. We treat each subject as a task. Further details regarding the benchmark subjects and models is in Appendix~\ref{app:mmlu_info}. For each subject, we average over prompt variants to obtain accuracy score per question. We then construct subject-level rank CIs ($\alpha_{tsk}=0.05$, examples in Figure ~\ref{fig:mmlu_ranks_examples_questions} b) and leaderboard-level rank PIs ($\alpha_{ldb}=0.5$, Figure  ~\ref{fig:app_mmlu_questions_quantile}
 a).

\paragraph{Subject-level analysis}
In a complex benchmark such as MMLU, it is insufficient to rank models based on a single average accuracy across subjects, as this fails to account for subject-to-subject variability.
Figure~\ref{fig:mmlu_ranks_examples_questions}(a) shows the distribution of accuracy for questions of two subjects, abstract algebra and high school macroeconomics, and the subject-level rank CIs (b). The models are ordered by average accuracy for both subjects, yet this average poorly represents these two specific subjects, let alone the full suite of 57. Furthermore, note the substantial variability in the individual questions of the algebra subject. The rank CIs indicate that the observed ranking cannot be trusted and all models are statistically interchangeable. While many intervals overlap for both subjects, in macroeconomics we  identify several differentiated groups of models.

\begin{figure}
    \centering
    \begin{subfigure}[r]{0.98\linewidth}
        \centering
        \includegraphics[width=0.98\linewidth]{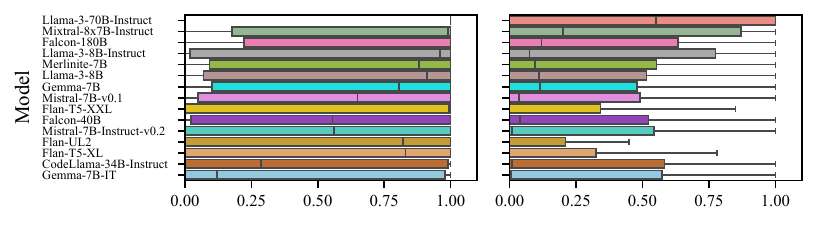}
        \caption{Accuracy distribution.}
    \end{subfigure}
    \begin{subfigure}[r]{0.98\linewidth}
        \centering
        \includegraphics[width=0.98\linewidth]{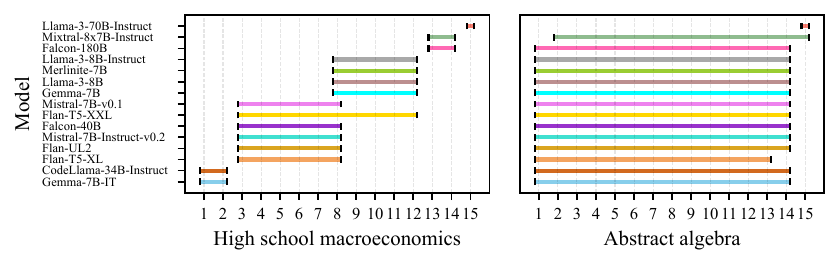}
        \caption{Rank CIs.}
    \end{subfigure}
    \caption{Examples of subject-level accuracy distribution and rank CIs. The intervals for high school macroeconomics (left) are narrower and show a clear groups of models, whereas for abstract algebra models almost completely overlap, indicating no detectable difference between models.}
    \label{fig:mmlu_ranks_examples_questions}
\end{figure}

\paragraph{Leaderboard-level analysis}

To illustrate how PIs summarize performance across subjects, we randomly select 10 subjects as unseen test cases and aggregate the remaining 47 subject-level rank CIs into leaderboard-level rank PIs. Results are shown in Figure~\ref{fig:app_mmlu_questions_quantile}(a). For each new subject, we check whether its rank CI falls within the corresponding rank PI. The average coverage over these unseen subjects is at least $1 - (\alpha_{tsk} + \alpha_{ldb}) = 0.45$ for all models. Figure~\ref{fig:app_mmlu_questions_quantile}(b) presents two example models' PIs alongside the rank CIs of the unseen subjects. For Llama-3-70B-Instruct, the  rank CIs are wider than the rank PI due to the miscoverage rate $\alpha_{ldb}=0.5$.

\begin{figure}
    \centering
    \begin{subfigure}[r]{0.98\linewidth}
    \centering
    \includegraphics[width=0.98\linewidth]{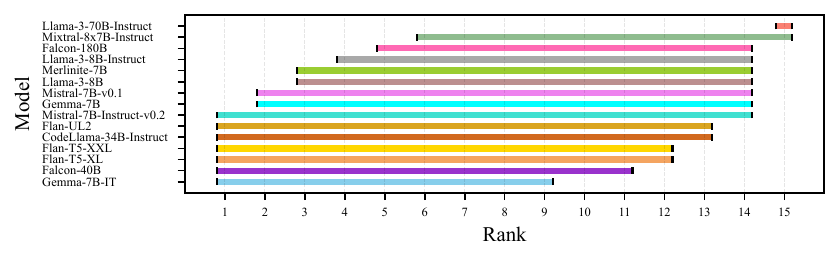}
    \caption{Leaderboard-level PIs}
    \end{subfigure}
    \begin{subfigure}[r]{0.98\linewidth}
    \centering
    \includegraphics[width=0.98\linewidth]{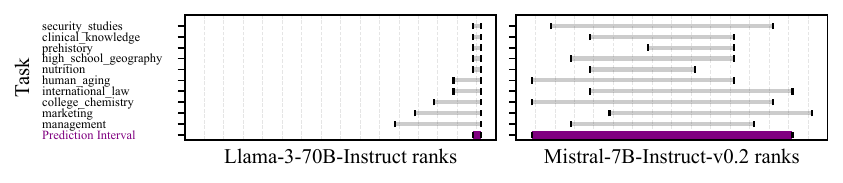}
    \caption{Rank CIs of unseen subjects}
    \end{subfigure}
    \caption{Leaderboard-level PIs for 47 subjects and rank CIs of unseen subjects. For models other than Llama-3-70B-Instruct, the ranks vary with tasks and their rank CIs often fall in the middle of the pack.}
    \label{fig:app_mmlu_questions_quantile}
\end{figure}

\section{Related work}\label{sec:related_work}

Leaderboards often rely on single-value rankings without uncertainty quantification~\cite{miller2024adding, ackerman2025statistical}. While platforms like Chatbot Arena~\cite{chiang2024chatbot} use BT models~\cite{bradley1952rank, ameli2024statistical} and bootstrapping for CIs, there is growing consensus that identifying significant model differences requires rank-based CIs rather than traditional score CIs~\cite{foucart2025ranking, valdeira2025ranking}. Robust model evaluation requires granular analysis across categories or prompts~\cite{frick2025prompt, avelar2026prompt, jung2026defines}, as simple averaging can obscure variability and lack statistical guarantees~\cite{demvsar2006statistical, longjohn2025statistical}. To address this,~\citet{demvsar2006statistical} recommended using the Friedman test with post-hoc analysis to compare multiple classifiers across multiple datasets. However, applying such tests indiscriminately can lead to misleading conclusions and reduce replicability. More robust frameworks, such as those proposed by~\citet{Dror2017} for NLP, address these issues. Advanced techniques, including linear mixed-effect models~\cite{geburek2024lmems} and Bayesian hierarchical modeling~\cite{longjohn2025statistical}, have also been proposed to quantify uncertainty in summary metrics.

Despite this progress in evaluating and quantifying model performance, existing methods generally do not address how to propagate rank uncertainty when aggregating leaderboard tasks, nor do they capture task-specific ranking uncertainty or cross-task heterogeneity. Current approaches still lack interpretable CIs that jointly account for uncertainty in aggregate leaderboard rankings while reflecting variation across tasks. To address these gaps, our work introduces a hierarchical framework that constructs task-level rank CIs and aggregates them into leaderboard-level PIs with explicit coverage guarantees over the task distribution.
\section{Conclusions}\label{sec:conclusions}

We introduced a hierarchical framework that constructs CIs for model rankings on individual tasks and aggregates them into leaderboard-level PIs, while ensuring statistical validity and coverage guarantees over the task distribution. Our method is the first to fully characterize ranking uncertainty in leaderboard evaluation by aggregating task-level CIs, operates directly on ranks, and requires minimal assumptions. A potential limitation of our approach is that it assumes all models are evaluated on all tasks; for incomplete data, methods like BT can be used, where models are evaluated on different tasks or prompts. Rank PIs are only applicable if rank CIs can be constructed for each task on the same models. For multiple metrics, we recommend aggregating them into a composite score before applying our method, as it does not natively handle multivariate metrics. Coverage at the task level is somewhat conservative, which may result in wider intervals. This can be partially mitigated by using methods that adapt to model correlations rather than Algorithm \ref{alg:task_ci}, though these are more computationally intensive. While rank-based summaries improve comparability and interpretability, they may obscure the magnitudes of scores; we suggest supplementing rank intervals with actual score summaries for a more complete evaluation.

\begin{ack}
This work was supported by the Israel Science Foundation.
\end{ack}

\bibliographystyle{plainnat}
\bibliography{leaderboard_ranking_preprint}

@misc{open-llm-leaderboard-v2,
  author = {Cl{\'e}mentine Fourrier and Nathan Habib and Alina Lozovskaya and Konrad Szafer and Thomas Wolf},
  title = {Open LLM Leaderboard v2},
  year = {2024},
  publisher = {Hugging Face},
  howpublished = {\url{https://huggingface.co/spaces/open-llm-leaderboard/open_llm_leaderboard}}
}

@book{wilcox2011introduction,
  title={Introduction to robust estimation and hypothesis testing},
  author={Wilcox, Rand R},
  year={2011},
  publisher={Academic press}
}

@incollection{wilcox2023comparing,
  title={Comparing Two Dependent Groups},
  author={Wilcox, Rand R},
  booktitle={A Guide to Robust Statistical Methods},
  pages={83--96},
  year={2023},
  publisher={Springer}
}

@inproceedings{mishra2021robust,
  title={How robust are model rankings: A leaderboard customization approach for equitable evaluation},
  author={Mishra, Swaroop and Arunkumar, Anjana},
  booktitle={Proceedings of the AAAI Conference on Artificial Intelligence},
  volume={35},
  pages={13561--13569},
  year={2021}
}

@article{miller2024adding,
  title={Adding error bars to evals: A statistical approach to language model evaluations},
  author={Miller, Evan},
  journal={arXiv preprint arXiv:2411.00640},
  year={2024}
}

@article{heineman2025signal,
  title={Signal and noise: A framework for reducing uncertainty in language model evaluation},
  author={Heineman, David and Hofmann, Valentin and Magnusson, Ian and Gu, Yuling and Smith, Noah A and Hajishirzi, Hannaneh and Lo, Kyle and Dodge, Jesse},
  journal={arXiv preprint arXiv:2508.13144},
  year={2025}
}

@inproceedings{foucart2025ranking,
  title={Ranking the scores of algorithms with confidence},
  author={Foucart, Adrien and Elskens, Arthur and Decaestecker, Christine},
  booktitle={ESANN 2025 proceedings},
  pages={431--436},
  year={2025}
}

@article{ackerman2025statistical,
  title={Statistical multi-metric evaluation and visualization of LLM system predictive performance},
  author={Ackerman, Samuel and Farchi, Eitan and Raz, Orna and Toledo, Assaf},
  journal={arXiv preprint arXiv:2501.18243},
  year={2025}
}

@article{ameli2024statistical,
  title={A statistical framework for ranking llm-based chatbots},
  author={Ameli, Siavash and Zhuang, Siyuan and Stoica, Ion and Mahoney, Michael W},
  journal={arXiv preprint arXiv:2412.18407},
  year={2024}
}

@article{wang2024confidence,
  title={Confidence Diagram of Nonparametric Ranking for Uncertainty Assessment in Large Language Models Evaluation},
  author={Wang, Zebin and Han, Yi and Fang, Ethan X and Wang, Lan and Lu, Junwei},
  journal={arXiv preprint arXiv:2412.05506},
  year={2024}
}

@article{avelar2026prompt,
  title={Prompt-Dependent Ranking of Large Language Models with Uncertainty Quantification},
  author={Avelar Menendez, Angel Rodrigo and Liu, Yufeng and Dai, Xiaowu},
  journal={arXiv e-prints},
  pages={arXiv--2603},
  year={2026}
}

@inproceedings{frick2025prompt,
  title={Prompt-to-leaderboard: Prompt-adaptive LLM evaluations},
  author={Frick, Evan and Chen, Connor and Tennyson, Joseph and Li, Tianle and Chiang, Wei-Lin and Angelopoulos, Anastasios Nikolas and Stoica, Ion},
  booktitle={Forty-second International Conference on Machine Learning},
  year={2025}
}

@article{jung2026defines,
  title={Who Defines" Best"? Towards Interactive, User-Defined Evaluation of LLM Leaderboards},
  author={Jung, Minji and Lee, Minjae and Kim, Yejin and Choi, Sarang and Kahng, Minsuk},
  journal={arXiv preprint arXiv:2604.21769},
  year={2026}
}

@article{Dror2017,
  author    = {Rotem Dror and Gili Baumer and Marina Bogomolov and Roi Reichart},
  title     = {Replicability Analysis for Natural Language Processing: Testing Significance with Multiple Datasets},
  journal   = {Transactions of the Association for Computational Linguistics},
  volume    = {5},
  pages     = {471--486},
  year      = {2017},
  url       = {https://aclanthology.org/Q17-1034},
}

@inproceedings{carmona2025towards,
  title={Towards Robust Comparisons of NLP Models: A Case Study},
  author={Carmona, Vicente Ivan Sanchez and Jiang, Shanshan and Dong, Bin},
  booktitle={Proceedings of the 31st International Conference on Computational Linguistics},
  pages={4973--4979},
  year={2025}
}

@article{longjohn2025statistical,
  title={Statistical Uncertainty Quantification for Aggregate Performance Metrics in Machine Learning Benchmarks},
  author={Longjohn, Rachel and Gopalan, Giri and Casleton, Emily},
  journal={arXiv preprint arXiv:2501.04234},
  year={2025}
}

@article{demvsar2006statistical,
  title={Statistical comparisons of classifiers over multiple data sets},
  author={Dem{\v{s}}ar, Janez},
  journal={Journal of Machine learning research},
  volume={7},
  number={Jan},
  pages={1--30},
  year={2006}
}

@article{geburek2024lmems,
  title={LMEMs for post-hoc analysis of HPO Benchmarking},
  author={Geburek, Anton and Mallik, Neeratyoy and Stoll, Danny and Bouthillier, Xavier and Hutter, Frank},
  journal={arXiv preprint arXiv:2408.02533},
  year={2024}
}

@article{erickson2025tabarena,
  title={TabArena: A Living Benchmark for Machine Learning on Tabular Data},
  author={Erickson, Nick and Purucker, Lennart and Tschalzev, Andrej and Holzm{\"u}ller, David and Desai, Prateek Mutalik and Hutter, Frank and others},
  journal={arXiv preprint arXiv:2506.16791},
  year={2025}
}

@inproceedings{stojanovic2026evaluating,
  title={Evaluating Passenger Segment Sensitivity and Reliability in Airline Satisfaction AI Systems},
  author={Stojanovic, Milo{\v{s}} and Nikolic, Milena},
  booktitle={2026 XXV International Symposium INFOTEH-JAHORINA (INFOTEH)},
  year={2026},
  month={March},
  address={Jahorina, Bosnia and Herzegovina},
  url={https://infoteh.etf.ues.rs.ba/zbornik/2026/radovi/343.pdf}
}

@article{polo2024efficient,
  title={Efficient multi-prompt evaluation of llms},
  author={Polo, Felipe M and Xu, Ronald and Weber, Lucas and Silva, M{\'\i}rian and Bhardwaj, Onkar and Choshen, Leshem and de Oliveira, Allysson F and Sun, Yuekai and Yurochkin, Mikhail},
  journal={Advances in Neural Information Processing Systems},
  volume={37},
  pages={22483--22512},
  year={2024}
}

@article{hendryckstest2021,
title={Measuring Massive Multitask Language Understanding},
author={Dan Hendrycks and Collin Burns and Steven Basart and Andy Zou and Mantas Mazeika and Dawn Song and Jacob Steinhardt},
journal={Proceedings of the International Conference on Learning Representations (ICLR)},
year={2021}
}

@inproceedings{chiang2024chatbot,
  title={Chatbot arena: An open platform for evaluating llms by human preference},
  author={Chiang, Wei-Lin and Zheng, Lianmin and Sheng, Ying and Angelopoulos, Anastasios Nikolas and Li, Tianle and Li, Dacheng and Zhu, Banghua and Zhang, Hao and Jordan, Michael and Gonzalez, Joseph E and others},
  booktitle={Forty-first International Conference on Machine Learning},
  year={2024}
}

@inproceedings{son2025arena,
    title={Arena-Lite: Efficient and Reliable Large Language Model Evaluation via Tournament-Based Direct Comparisons},
    author={Son, Seonil and Oh, Ju-Min and Jin, Heegon and Jang, Cheolhun and Jeong, Jeongbeom and Kim, Kuntae},
    booktitle={Proceedings of the 2025 Conference on Empirical Methods in Natural Language Processing},
    pages={7068--7086},
    year={2025}
}

@article{benjamini2005false,
  title={False discovery rate--adjusted multiple confidence intervals for selected parameters},
  author={Benjamini, Yoav and Yekutieli, Daniel},
  journal={Journal of the American Statistical Association},
  volume={100},
  number={469},
  pages={71--81},
  year={2005},
  publisher={Taylor \& Francis}
}

@techreport{holm2013confidence,
    author={Holm, Sture},
    title={Confidence intervals for ranks},
    institution={Uppsala University},
    year={2013}
}

@article{al2021simultaneous,
  title={Simultaneous confidence intervals for ranks using the partitioning principle},
  author={Al Mohamad, Diaa and van Zwet, Erik and Solari, Aldo and Goeman, Jelle},
  journal={Electronic Journal of Statistics},
  volume={15},
  number={1},
  pages={3109--3134},
  year={2021},
  publisher={Institute of Mathematical Statistics and Bernoulli Society},
  doi={10.1214/21-EJS1847}
}

@article{al2022simultaneous,
  title={Simultaneous confidence intervals for ranks with application to ranking institutions},
  author={Al Mohamad, Diaa and Goeman, Jelle J and van Zwet, Erik W},
  journal={Biometrics},
  volume={78},
  number={1},
  pages={238--247},
  year={2022},
  publisher={Wiley Online Library}
}

@inproceedings{valdeira2025ranking,
  title={Ranking with confidence for large scale comparison data},
  author={Valdeira, Filipa and Soares, Cl{\'a}udia},
  booktitle={Proceedings of the 2025 SIAM International Conference on Data Mining (SDM)},
  pages={223--232},
  year={2025},
  organization={SIAM}
}

@article{chetverikov2024csranks,
  title={csranks: an R package for estimation and inference involving ranks},
  author={Chetverikov, Denis and Mogstad, Magne and Morgen, Pawel and Romano, Joseph and Shaikh, Azeem and Wilhelm, Daniel},
  journal={arXiv preprint arXiv:2401.15205},
  year={2024}
}

@article{rising2021uncertainty,
  title={Uncertainty in ranking},
  author={Rising, Justin},
  journal={arXiv preprint arXiv:2107.03459},
  year={2021}
}

@article{chandra2025finite,
  title={Finite-Sample Valid Rank Confidence Sets for a Broad Class of Statistical and Machine Learning Models},
  author={Chandra, Onrina and Xie, Min-ge},
  journal={arXiv preprint arXiv:2512.00316},
  year={2025}
}

@article{liu2025conformalintervals,
      title={Prediction Sets and Conformal Inference with Interval Outcomes}, 
      author={Weiguang Liu and Áureo de Paula and Elie Tamer},
      year={2025},
      eprint={2501.10117},
      archivePrefix={arXiv},
      primaryClass={econ.EM},
      journal={arXiv preprint arXiv:2501.10117},
      url={https://arxiv.org/abs/2501.10117}, 
}

@book{vovk2005algorithmic,
  title={Algorithmic learning in a random world},
  author={Vovk, Vladimir and Gammerman, Alexander and Shafer, Glenn},
  year={2005},
  publisher={Springer}
}

@article{lei2018distribution,
  title={Distribution-free predictive inference for regression},
  author={Lei, Jing and G’Sell, Max and Rinaldo, Alessandro and Tibshirani, Ryan J and Wasserman, Larry},
  journal={Journal of the American Statistical Association},
  volume={113},
  number={523},
  pages={1094--1111},
  year={2018},
  publisher={Taylor \& Francis}
}

@techreport{anderson2017split,
  title={Split-sample strategies for avoiding false discoveries},
  author={Anderson, Michael L and Magruder, Jeremy},
  year={2017},
  institution={National Bureau of Economic Research}
}

@article{gasparin2024merging,
  title={Merging uncertainty sets via majority vote},
  author={Gasparin, Matteo and Ramdas, Aaditya},
  journal={arXiv preprint arXiv:2401.09379},
  year={2024}
}

@article{posten1979robustness,
  title={The robustness of the one-sample t-test over the Pearson system},
  author={Posten, Harry O},
  journal={Journal of Statistical Computation and Simulation},
  volume={9},
  number={2},
  pages={133--149},
  year={1979},
  publisher={Taylor \& Francis}
}

@article{bradley1952rank,
    title={Rank analysis of incomplete block designs: I. the method of paired comparisons},
    author={Bradley, Ralph Allan and Terry, Milton E},
    journal={Biometrika},
    volume={39},
    number={3/4},
    pages={324--345},
    year={1952},
    publisher={JSTOR}
}

\newpage
\appendix
\section{Ranking implementation Details}\label{app:implementation_details}

\subsection{Task-level rank CIs}
Here we provide more details regarding our method for constructing task-level rank CIs, as well as a discussion of alternative choices.

\paragraph{Statistical tests}
Throughout this paper, we use the parametric paired t-tests to compute the p-values.
Paired tests are computed on the vector of differences $d_{jk} = X^b_{\cdot j}-X^b_{\cdot k}$,
where $X^b_{\cdot j}$ is the $j$'th column in the observed performance matrix. The p-value $p^b_{jk}$ of the test
is defined as: $p^b_{jk}= T_{n_b\,-\,1} \big(\frac{\bar{d}}{s_d /\sqrt{n_b} }\big)$, where $\bar{d}$ and $s_d$ are the sample mean and standard deviation of $d_{jk}$, and $T_{n_b\,-\,1}$ is the cumulative distribution function (CDF) of the T-distribution with $n_b-1$ degrees of freedom. Formally, t-tests require that $\bar{d}$ be normally distributed.
In model performance comparison, we expect this condition to hold, because typically either $n_b$ represents data observations and is large, or each cross-validation fold is based on averaging the performance on many observations and is approximately normal. Furthermore, the t-test was shown to work well in practice for symmetric distributions~\citep{posten1979robustness} even when the normality assumption is violated.

\paragraph{FWER control procedures}
We use Holm's procedure for FWER control over a family of hypotheses.
For $K$ hypotheses with ordered $p$-values $p_{(1)}\le\cdots\le p_{(K)}$, Holm’s procedure examines p-values iteratively, starting at $k=1$. At each step, it rejects hypothesis $H_{(k)}$ if
\[
p_{(k)} \le \frac{\alpha}{K-k+1}.
\]
It stops at the first index that failed to reject. 
The Holm procedure controls the FWER at level $\alpha$,
is fast to compute, and holds for any dependency structure between the p-values. 

\paragraph{Alternative methods for constructing rank CIs}

Alternative statistical methods for constructing rank CIs require either the construction of the p-values or more efficient FWER procedures.

For computing the p-value when the samples are matched, the t-distribution can be replaced with more robust versions when the distributions are not normal, for example a Wilcoxon signed-rank test~\cite{wilcox2011introduction} (less power, more robust) or a trimmed-mean based t-test~\cite{wilcox2023comparing} (intermediate power but requires sampling). 
Non-matched samples will require different p-value computations; see our example in Appendix \ref{app:pairwise_preferences}.

For the FWER procedure, the Holm method can be replaced by resampling algorithms that compute the thresholds directly, by resampling from the data distribution after removing all mean difference~\cite{chetverikov2024csranks}. This gives thresholds that are better adapted to the test correlation structure. This will usually form smaller intervals that are valid if sample size is large enough. (Note that the implementation in `csranks` package~\cite{chetverikov2024csranks} assumes Normality, and is not suited for small samples).

\subsection{Runtime and scalability analysis}

We analyzed the theoretical runtime of our framework to validate its scalability and confirm that it scales linearly with the number of tasks.

Let $M$ be the number of models, $N$ the number of tasks, and $n$ the number of base values per model-task pair. The end-to-end runtime consists of three main steps:
\begin{enumerate}
    \item \textbf{P-Value Computation}: Computing pairwise paired t-tests for all model combinations on a single task takes $O(nM^2)$. Across all tasks, this requires $O(N n M^2)$.
    \item \textbf{Task-level rank CIs} (Algorithm~\ref{alg:task_ci}): For marginal rank CIs, applying the step-down FWER control requires sorting $M-1$ p-values for each of the $M$ models, yielding $O(M^2 \log M)$ per task. Across all tasks, this step requires $O(N M^2 \log M)$.
    \item \textbf{Leaderboard-level rank PIs} (Algorithm~\ref{alg:quantile}): A straightforward approach would require sorting the per-task bounds, with a runtime of $O(M N \log N)$. However, our implementation leverages optimized selection algorithms (via NumPy and Pandas) for quantile identification, avoiding full sorting and reducing the average runtime to $O(M N)$.
\end{enumerate}

Total average runtime: $O(N n M^2 + N M^2 \log M + M N) \approx O(NM^2n)$.

Because tasks are independent, the runtime for computing task-level rank confidence intervals (CIs) mainly depends on available parallelism (i.e., number of CPUs) rather than the total number of tasks. To assess the computational complexity of Algorithm~\ref{alg:quantile}, we measured execution time for varying numbers of tasks $N$, using moderate correlation ($\rho=0.5$, $block\_size=3$) and no ties. As shown in Figure~\ref{fig:app_runtime_quantile}, runtime increases linearly with $N$, although variability also grows at higher $N$, likely due to memory management overhead.

\begin{figure}[ht]
    \centering
    \includegraphics[width=0.95\linewidth]{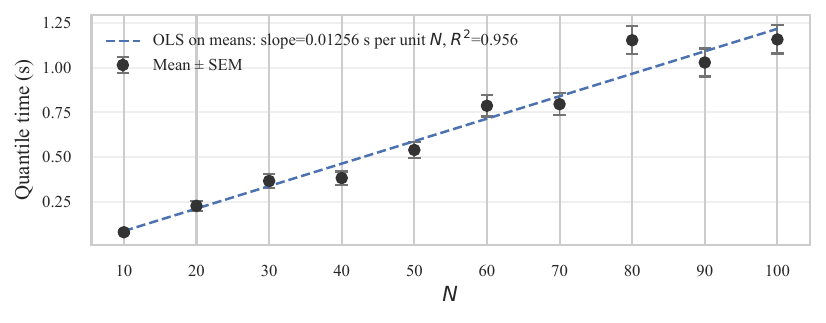}
      \caption{Runtime of the quantile method for leaderboard-level rank PIs increases linearly with $N$.}
    \label{fig:app_runtime_quantile}
\end{figure}

Since the runtime scales linearly with the number of tasks, the framework is efficient and well-suited for large-scale leaderboards. For practical use in such settings, we recommend storing pairwise comparison results (such as t-test p-values) offline. When new tasks or models are introduced, simply update their p-values and recompute the task-level rank CIs and leaderboard-level PIs using these stored results.

\subsection{Reproducibility instructions and computing infrastructure}

All code for our hierarchical ranking methods and the scripts to reproduce the experiments (Python 3.13.5) are publicly available at: \url{https://github.com/BityaNeuhof/leaderboard-rank-intervals}. Synthetic data simulations (see Appendix~\ref{app:synthetic_desc}) were run on a server with 128 GB RAM and 100 CPUs.
\section{Rank CIs for pairwise preferences data}\label{app:pairwise_preferences}

As LLM evaluation progresses toward reflecting human preferences, static leaderboards are increasingly seen as insufficient~\cite{chiang2024chatbot}. Instead of relying on fixed benchmarks, models answer open-ended, real-world questions, and human or LLM judges compare pairs of answers to determine which is better, resulting in pairwise preference data. For each prompt, a pair of models is randomly sampled and compared, rather than evaluating all models on a static set of prompts. Scores are estimated via the Bradley-Terry (BT) model~\cite{bradley1952rank} or its variants~\cite{chiang2024chatbot, ameli2024statistical, son2025arena}, and models are then ranked accordingly. Pairwise data can be bootstrapped to obtain CIs for these scores. Recent work~\cite{wang2024confidence, avelar2026prompt, jung2026defines} has introduced more robust uncertainty assessments for model ranks, such as nonparametric ranking diagrams and prompt-dependent analysis. Task-level rank CIs can similarly be constructed from BT scores using bootstrap resampling or parametric estimation of the covariance between model scores, allowing for more fine-grained insight into rank uncertainty across different prompts. 

Following this shift toward uncertainty-aware and context-dependent evaluation, we applied our task-level rank CI method (Algorithm~\ref{alg:task_ci}) to the Arena Human Preference 140K dataset.~\footnote{\href{https://huggingface.co/datasets/lmarena-ai/arena-human-preference-140k}{Arena human preference 140k Hugging face dataset card.}} The dataset consists of competitions comparing pairs of models ($model\_a, model\_b$) on user conversations. Each competition can result in $model\_a$ or $model\_b$ winning, a tie ($tie$), or both losing ($both\_bad$); both $tie$ and $both\_bad$ are treated as ties in our analysis. We focused our analysis on 45 models with more than 2,000 recorded competitions to ensure robust estimation. We adopted the parametric approach from the Arena-Rank repository,~\footnote{\href{https://github.com/lmarena/arena-rank?ref=arena.ai}{Arena-Rank repository}} which computes BT scores and their associated covariance matrix using a z-distribution. This approach provides a computationally scalable alternative to bootstrapping, matching the efficiency goals of recent tournament-based benchmarks~\cite{son2025arena} while maintaining the statistical depth needed for rank inference under prompt variability. Using these scores and covariances, we applied Algorithm~\ref{alg:task_ci} with Holm's FWER correction ($\alpha = 0.05$) to construct the rank CIs. Estimation was performed using a local implementation adapted from the official Arena-Rank repository, with support for the contextual BT model. Compared to the "rank spread" intervals currently reported by Arena AI,~\footnote{\href{https://arena.ai/blog/ranking-method/}{Arena's blog on ranking method.}} our FWER rank CIs are identical for approximately half of the models and wider for others, as summarized in Table~\ref{tab:app_chatbotarena_results}. Our intervals are guaranteed to include the true rank of each model with probability $1 - \alpha$, providing a more conservative and statistically valid bound than standard spread measures, particularly in high-variance scenarios involving prompt-dependent performance shifts~\cite{avelar2026prompt}.

\begin{table}[ht]
\centering
\caption{Rank CIs comparison}
\label{tab:app_chatbotarena_results}
\resizebox{\textwidth}{!}{%
\begin{tabular}{@{}llllll@{}}
\toprule
Model & Raw rank & Rank spread & FWER rank CI & BT score & BT score CI \\ \midrule
gemini-2.5-pro & 1 & (1, 1) & (1, 1) & 1126.47 & (1120.09, 1132.86) \\
o3-2025-04-16 & 2 & (2, 7) & (2, 7) & 1082.69 & (1076.17, 1089.22) \\
chatgpt-4o-latest-20250326 & 3 & (2, 7) & (2, 7) & 1081.35 & (1074.47, 1088.23) \\
gemini-2.5-pro-preview-05-06 & 4 & (2, 8) & (2, 8) & 1077.86 & (1067.67, 1088.04) \\
deepseek-r1-0528 & 5 & (2, 8) & (2, 8) & 1076.53 & (1069.29, 1083.78) \\
grok-3-preview-02-24 & 6 & (2, 8) & (2, 8) & 1074.11 & (1066.47, 1081.75) \\
llama-4-maverick-03-26-experimental & 7 & (2, 8) & (2, 8) & 1069.55 & (1061.76, 1077.34) \\
gemini-2.5-flash & 8 & (4, 8) & (4, 8) & 1063.81 & (1057.94, 1069.68) \\
qwen3-235b-a22b-no-thinking & 9 & (9, 11) & (9, 11) & 1044.47 & (1038.45, 1050.5) \\
gemini-2.5-flash-preview-04-17 & 10 & (9, 18) & (9, 18) & 1037.17 & (1028.98, 1045.36) \\
kimi-k2-0711-preview & 11 & (9, 19) & (9, 20) & 1033.42 & (1022.11, 1044.74) \\
gpt-4.1-2025-04-14 & 12 & (10, 19) & (9, 20) & 1030.27 & (1023.1, 1037.44) \\
deepseek-v3-0324 & 13 & (10, 19) & (9, 20) & 1029.53 & (1022.22, 1036.83) \\
qwen-max-2025-01-25 & 14 & (10, 20) & (9, 20) & 1028.6 & (1019.99, 1037.21) \\
qwen3-235b-a22b & 15 & (10, 20) & (10, 20) & 1027.72 & (1019.79, 1035.64) \\
mistral-medium-2505 & 16 & (10, 20) & (10, 20) & 1025.82 & (1019.87, 1031.76) \\
o4-mini-2025-04-16 & 17 & (10, 20) & (10, 21) & 1025.68 & (1018.49, 1032.87) \\
gemini-2.5-flash-lite-preview-06-17-thinking & 18 & (10, 22) & (10, 23) & 1024.24 & (1016.27, 1032.21) \\
minimax-m1 & 19 & (11, 24) & (11, 24) & 1016.65 & (1008.83, 1024.47) \\
gemma-3-27b-it & 20 & (14, 24) & (11, 24) & 1015.18 & (1008.43, 1021.92) \\
claude-opus-4-20250514-thinking-16k & 21 & (18, 24) & (17, 25) & 1009.65 & (1002.18, 1017.13) \\
grok-3-mini-beta & 22 & (18, 24) & (17, 26) & 1009.41 & (1001.98, 1016.84) \\
claude-opus-4-20250514 & 23 & (19, 24) & (19, 25) & 1007.92 & (1002.18, 1013.65) \\
grok-3-mini-high & 24 & (19, 29) & (17, 29) & 1005.02 & (994.19, 1015.85) \\
claude-sonnet-4-20250514-thinking-32k & 25 & (24, 31) & (24, 32) & 992.34 & (984.56, 1000.12) \\
gemini-2.0-flash-001 & 26 & (24, 31) & (24, 32) & 991.42 & (983.97, 998.86) \\
qwq-32b & 27 & (24, 32) & (24, 32) & 989.6 & (980.74, 998.47) \\
mistral-small-2506 & 28 & (24, 32) & (21, 32) & 987.95 & (976.29, 999.61) \\
gpt-4.1-mini-2025-04-14 & 29 & (24, 32) & (24, 32) & 987.03 & (979.73, 994.34) \\
qwen3-30b-a3b & 30 & (25, 32) & (25, 32) & 981.07 & (973.4, 988.75) \\
command-a-03-2025 & 31 & (25, 32) & (25, 34) & 977.89 & (971.06, 984.72) \\
claude-sonnet-4-20250514 & 32 & (27, 32) & (25, 35) & 977.04 & (970.66, 983.43) \\
o3-mini & 33 & (33, 35) & (33, 36) & 961.53 & (954.33, 968.73) \\
amazon-nova-experimental-chat-05-14 & 34 & (33, 36) & (32, 38) & 959.25 & (949.2, 969.29) \\
gemma-3n-e4b-it & 35 & (33, 38) & (32, 38) & 957.85 & (946.51, 969.2) \\
llama-4-scout-17b-16e-instruct & 36 & (34, 40) & (34, 40) & 939.81 & (928.81, 950.81) \\
claude-3-7-sonnet-20250219-thinking-32k & 37 & (35, 40) & (35, 40) & 939.67 & (932.63, 946.71) \\
llama-4-maverick-17b-128e-instruct & 38 & (35, 40) & (35, 40) & 939.06 & (930.94, 947.17) \\
claude-3-7-sonnet-20250219 & 39 & (36, 40) & (36, 40) & 933.81 & (926.53, 941.08) \\
claude-3-5-sonnet-20241022 & 40 & (36, 40) & (36, 40) & 932.98 & (925.73, 940.23) \\
llama-3.3-70b-instruct & 41 & (41, 44) & (41, 45) & 914.06 & (905.44, 922.67) \\
amazon.nova-pro-v1:0 & 42 & (41, 44) & (41, 45) & 913.6 & (906.28, 920.92) \\
mistral-small-3.1-24b-instruct-2503 & 43 & (41, 45) & (41, 45) & 910.09 & (899.89, 920.29) \\
claude-3-5-haiku-20241022 & 44 & (41, 45) & (41, 45) & 900.34 & (892.9, 907.79) \\
magistral-medium-2506 & 45 & (43, 45) & (44, 45) & 890.5 & (878.45, 902.55) \\ \bottomrule
\end{tabular}
}
\end{table}

Although our framework can treat prompt categories as separate tasks, the current prompt categorization in the Arena Human Preference 140K dataset is non-exclusive, with prompts often assigned to multiple categories. For this reason, our analysis focuses on the overall preference distribution. If future datasets provide distinct, non-overlapping prompt categories, task-level rank CIs (or alternative valid methods) can be computed for each category and then aggregated to obtain leaderboard-level rank PIs.
\section{Proof of Theorem~\ref{thm:leaderboard}}\label{app:proof}

Formally, we can define a task $t_b$ to consist of a triplet $(\truescore, \mathbf{\basematrix} ,\Sigma)^b$, where $\truescore^b \in R^M$ is the true performance score vector,
$\mathbf{\basematrix}^b$ is the observed performance matrix, and $\Sigma$ is the correlation between models. The true ranks for the task $r^b$ can be derived from $\truescore^b$.

\begin{theorem*}[~\ref{thm:leaderboard}]
Assume the observed tasks $t^1, \ldots, t^N$ are independently sampled from the distribution $\mathcal{P}$. Construct the collection $\mathcal{I}_{M \times N}$ of task-level rank CIs, each with marginal coverage $1 - \alpha_{tsk}$.
Now, consider a new independent task $t^* \sim \mathcal{P}$, with true ranks $r_1^*, \ldots, r_M^*$.
    The leaderboard rank intervals $([L_1, U_1], \ldots, [L_M, U_M])$ produced by Algorithm~\ref{alg:quantile} achieve marginal task coverage of $1 - (\alpha_{ldb} + \alpha_{tsk})$. That is, for each model $c_j$,
    \begin{equation}\label{eq:app_thm_leaderboard}
        \begin{aligned}
            \prob \big( r_j^* \subseteq [L_j, U_j]\big)
            \geq 1 - (\alpha_{ldb} + \alpha_{tsk}).
        \end{aligned}
    \end{equation}
\end{theorem*}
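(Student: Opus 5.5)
Fix a model $c_j$ and drop the index $j$. The plan is to introduce an auxiliary, never-computed object: the task-level rank CI $[L^*,U^*]$ that Algorithm~\ref{alg:task_ci} would produce if it were applied to the new task $t^*$, using the same procedure as for the observed tasks. Theorem~\ref{thm:task} gives $\prob(r^* \subseteq [L^*,U^*]) \ge 1-\alpha_{tsk}$, where the probability is over the noise in $t^*$. Because $r^*$ is a contiguous rank set, the event $\{r^* \subseteq [L,U]\}$ contains the event
\[
\{r^*\subseteq[L^*,U^*]\}\cap\{L\le L^*\}\cap\{U^*\le U\}.
\]
A union bound then gives
\[
\prob(r^*\not\subseteq[L,U]) \le \alpha_{tsk} + \prob(L^*<L) + \prob(U^*>U),
\]
so it suffices to show $\prob(L^*<L)\le \alpha_{ldb}/2$ and $\prob(U^*>U)\le\alpha_{ldb}/2$.

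For the two conformal terms, I would use exchangeability. The tasks $t^1,\dots,t^N,t^*$ are i.i.d.\ from $\mathcal{P}$, including their observed data, and each $L^b$ is the same deterministic function of task $b$'s data. Hence $(L^1,\dots,L^N,L^*)$ are i.i.d., and in particular exchangeable, and similarly for the upper bounds. The lower bound $L=\mathbf{L}[k_l]$ is the $k_l$-th smallest of the $N$ observed values. The event $L^*<L$ means $L^*$ is strictly smaller than at least $N-k_l+1$ of the others. Equivalently, $L^*$ lies strictly among the bottom $k_l-1$ positions under any tie-breaking, i.e.\ its rank among the $N+1$ values is at most $k_l-1$ when ties are broken in its disfavour. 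Breaking ties uniformly at random makes $L^*$'s rank uniform on $\{1,\dots,N+1\}$, so
\[
\prob(L^*<L)\le \frac{k_l}{N+1}\le \frac{\alpha_{ldb}}{2}
\]
by the floor in $k_l$. (One can even get $(k_l-1)/(N+1)$.) Symmetrically, with $k_u=\lceil (N+1)(1-\alpha_{ldb}/2)\rceil$,
\[
\prob(U^*>U)\le \frac{N+1-k_u}{N+1}\le \frac{\alpha_{ldb}}{2}.
\]
The conventions $\mathbf{L}[0]=1$ and $\mathbf{U}[N+1]=M$ handle the degenerate indices trivially.

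The main obstacle is handling ties in the discrete ranks carefully, since the bounds take values in $\{1,\dots,M\}$ and ties are common. The strict inequalities in the failure events, together with the random tie-breaking argument (or a direct count showing that at most $k_l$ of the $N+1$ indices can be strictly below the $k_l$-th order statistic of the others), are what make the quantile bound go through. A secondary subtlety is that Theorem~\ref{thm:task} is a statement conditional on the true scores $\truescore^*$, whereas here we need it unconditionally. This follows by integrating over $\truescore^*\sim\mathcal{P}$, because the CI procedure has coverage for every fixed $\truescore^*$.

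Combining the three terms yields $1-(\alpha_{ldb}+\alpha_{tsk})$.
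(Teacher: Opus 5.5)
Your proposal is correct and follows the paper's proof essentially step for step: the auxiliary CI $[L^*,U^*]$ for $t^*$, the union bound over the three failure events, and exchangeability giving $k_l/(N+1)\le\alpha_{ldb}/2$ together with its mirror image for the upper bounds. Your treatment of ties and of conditional versus unconditional task-level coverage is more careful than the paper's.

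There is one small off-by-one slip. On $\{L^*<L\}$, at most $k_l-1$ of the \emph{other} values are $\le L^*$, so $L^*$'s rank is at most $k_l$, not $k_l-1$. Your parenthetical claim that $(k_l-1)/(N+1)$ is achievable is therefore false: without ties the probability is exactly $k_l/(N+1)$. The main bound you actually use is unaffected.
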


First, for the new task $t^*$, we estimate a task-level rank CI $[L^*_j, U^*_j]$, with marginal coverage rate $1-\alpha_{tsk}$. 
The interval $[L^*_j, U^*_j]$ has two key properties:
\begin{enumerate}
    \item It covers the true rank $r^*_j$ with high probability $\prob \big( r^*_j \subseteq [L^*_j, U^*_j]\big) \geq 1-\alpha_{tsk}$.
    \item It is exchangeable with the task-level rank CIs $[L^1_j, U^1_j],\ldots , [L^N_j, U^N_j]$ because all intervals are derived from the same estimation method applied to independent and identically distributed tasks.
\end{enumerate}

Due to property (1), it suffices to show that $\prob \big([L^*_j, U^*_j] \subseteq [L_j, U_j] \big) \geq 1-\alpha_{ldb}$. By applying the union bound, we then obtain the desired coverage for the true rank.

Following a similar argument to~\citet{lei2018distribution}, we can verify that the quantiles defined in Algorithm~\ref{alg:quantile} provide sufficient coverage probabilities.
We bound the probability of a lower-side coverage error, that is, $\prob(L^*_j < L_j)$.

Because $L^*_j, L^1_j, \dots, L^N_j$ are exchangeable, any ordering of these $N+1$ integers is equally likely.

Specifically, let $L[1] \leq L[2] \leq \ldots \leq L[N]$ denote the order statistics of $L_1, \ldots, L_N$. Then,
$\prob(L^*_j < L[k]) \leq \frac{k}{N+1}$. (Equality holds if there are no ties.)
Substituting $L_j = L[k_l]$, we obtain:

\begin{equation}\label{eq:app_error_bound}
    \begin{aligned}
        \prob(L^*_j<L_j) &\leq \frac{k_l}{N+1}
\leq \left\lfloor (N+1)\frac{\alpha_{ldb}}{2} \right\rfloor \frac{1}{N+1} \leq \frac{\alpha_{ldb}}{2}.
    \end{aligned}
\end{equation}

Applying the same argument to the upper bounds $U^*_j$ and $U_j$ results in an additional coverage error of at most $\frac{\alpha_{ldb}}{2}$.

In summary, a coverage error can occur if (1) $r^*_j$ is not covered by $[L^*_j, U^*_j]$ (event $E_1$), (2) the new lower bound falls below the aggregated lower bound ($L^*_j < L_j$, event $E_2$), or (3) the new upper bound exceeds the aggregated upper bound ($U^*_j > U_j$, event $E_3$).

By union bound, we can bound the probability of the coverage error by:
\begin{equation}\label{eq:app_union_bound}
    \begin{aligned}
        \prob (E_1 \cup E_2 \cup E_3) \leq 
        \alpha_{tsk}+ 2\cdot \frac{\alpha_{ldb}}{2}
        = \alpha_{tsk} + \alpha_{ldb}
    \end{aligned}
\end{equation}
Meaning that the coverage probability is at least $1-(\alpha_{ldb} + \alpha_{tsk})$.
\section{Rank interval interpretation}\label{app:interp}

\subsection{Quantifying uncertainty in rankings}
Sections~\ref{sec:task_ranking} and~\ref{sec:leaderboard_ranking} introduced the two main components of our hierarchical ranking framework: Task-level rank CIs and leaderboard-level rank PIs. At each level, these rank intervals quantify the uncertainty in the estimated model ordering. The width of each interval indicates how precisely we can rank the models: if there is no information about performance, all models will be ranked between $[1, M]$. As more information becomes available, we can statistically distinguish between models, resulting in narrower rank CIs. 

Reducing variance at the task level, for example, by increasing the number of base values per model–task pair, is expected to narrow the interval width. These intervals help interpret relationships both within and across models, highlighting where rankings are stable, uncertain, or models are effectively tied. However, leaderboard PIs should not be mistaken for CIs. A CI estimates a model's average performance, with its width reflecting uncertainty in that estimate. In contrast, a PI describes how much a model’s rank varies across different tasks. Collecting more data narrows the CI by reducing uncertainty, but does not necessarily narrow the PI, since the spread of ranks across tasks may remain the same.

\subsection{Task selection strategies}
At the task level, rank CIs allow direct comparison of models within each task and help assess the average interval width. This informs which tasks should be included in the leaderboard aggregation. For instance, if models perform similarly on a task, resulting in wide or overlapping intervals, it may make sense to exclude that task to focus on those that better distinguish model performance. Strategies for reducing the number of tasks include:
\begin{enumerate}
\item Selecting a subset of tasks based on domain knowledge, which can be done at any stage without affecting the validity of the task-level rank CIs.
\item Filtering tasks whose rank intervals exceed a predefined width threshold.
\item Grouping similar tasks, such as clustering them by their top-k models, and then computing task-level CIs for each cluster.
\end{enumerate}

In Section~\ref{sec:tabarena}, Figure~\ref{fig:tabarena_task_dist}, we show that some models are better on regression tasks, while other outperform on multiclass classification. While comparing all models across all datasets is occasionally of interest, practitioners often focus on more targeted analyses, for example, evaluating tree-based models on regression datasets alone. We apply our ranking framework to a subset of the TabArena data comprising 18 tree-based models evaluated on 13 regression datasets. The results are shown in Figure \ref{fig:tabarena_regression_ranks}.

\begin{figure}[ht]
    \centering
    \includegraphics[width=0.95\columnwidth]{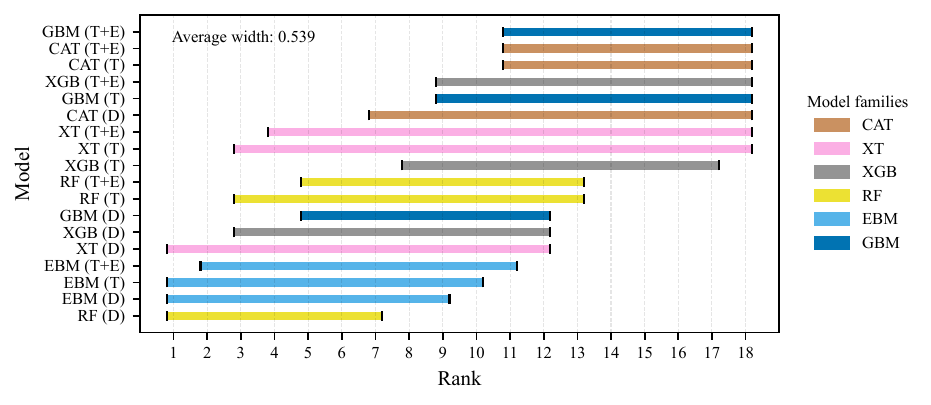}
    \caption{Leaderboard-level rank prediction intervals for 18 tree-based models on 13 regression datasets. Three model families clearly dominate the top: CAT, GBM, and XGB.}
    \label{fig:tabarena_regression_ranks}
\end{figure}

Based on the results, we can conclude that three families of models are consistently ranked at the top: CAT, GBM, and XGB, with little difference between the `tuned' and `tuned + ensemble' variations of models within each family. This is an example for the first strategy, selecting a subset of tasks of interest.

The second and third strategies are data-driven, so they require methods like sample splitting or $\alpha$-splitting to separate task selection from inference. This ensures that the task-level miscoverage rate remains controlled at $\alpha_{tsk}$~\cite{anderson2017split}.

At the leaderboard level, the width of the rank PIs depends on three factors: the width of the task-level rank CIs, the variability of model rank between tasks, and the leaderboard miscoverage rate $\alpha_{ldb}$. The leaderboard PI can also serve as an implicit task-selection mechanism. For example, setting $\alpha_{ldb}=0.5$ results in intervals that focus on tasks within the interquartile range (IQR) of the task-level rank CIs. However, this approach may yield a different set of tasks for each model.

\section{Experiments - simulation details}\label{app:synthetic_desc}

This section describes the synthetic data generation process and simulation parameters used to evaluate our hierarchical ranking framework. By generating synthetic performance data with known true ranks for each task, we can validate coverage properties and directly compare different ranking aggregation methods under controlled conditions.

\subsection{Simulation parameters}

\paragraph{Model and task parameters}
We simulate 1000 tasks, each involving a set of models characterized by a vector of true performance scores and a covariance matrix. The number of models is set as $M \in \{10, 30\}$, with $\truescore = [1, \sqrt{2}, \ldots, \sqrt{M}]$. The covariance matrix $\Sigma$ encodes both the variance of each model ($\sigma \in \{0.3, 0.7, 1.2\}$) and the pairwise correlations between models ($\rho \in \{0, 0.2, 0.5, 0.8\}$), structured in blocks of size $block\_size \in \{2, 3, 5\}$.

We sample the true performance matrix from a multivariate normal distribution $N(\truescore, \Sigma)$. Each column in that matrix, $\truescore^b$, represents the true performance scores of all models on task $t_b$. To simulate ties within a task, we select a proportion of values from the true performance matrix (as specified by $tie_{proportion}$) and replace them with the mean of those values.

We define $\Sigma_{task}$ by setting all zero-entries of $\Sigma$ to $0.1$. This additional correlation factor represents the task-dependent correlations between models. 
We then sample observed performance matrices $\mathbf{\basematrix}_b \in \mathbb{R}^{n_{base} \times M}$ (base values for task $t_b$), with two different sample sizes: $n_{base} = 10$ (to mimic cross-validation folds) and $n_{base} = 200$ (to represent data observations).

\paragraph{Ranking methods parameters}

At the task level, we construct marginal rank CIs with Algorithm~\ref{alg:task_ci} as described in Section±\ref{sec:task_ranking} and in Appendix~\ref{app:implementation_details}, using $\alpha_{tsk} \in \{0.02, 0.05, 0.1\}$.

At the leaderboard-level, we repeatedly ($n\_repetitions=100$) sample $N \in \{20, 60\}$ tasks, without replacement from the 1000 simulated tasks. We sample 100 additional task-level rank CIs (distinct from the $N$ samples previously, but identically distributed) to represent unseen tasks.

For each model $c_j$, we aggregate the task-level CIs into leaderboard-level rank PIs,  using two approaches: (1) a naive union baseline, which takes the minimum lower and maximum upper bounds across tasks; and (2) the quantile merge method (Algorithm~\ref{alg:quantile}) with $\alpha_{ldb} \in \{0.3, 0.5\}$.

\subsection{Evaluation}

\paragraph{Average normalized width}
We compute the normalized width of both task-level and leaderboard-level interval.
For each task $t_b$, we compute an average over all models:
\begin{equation}\label{eq:app_tsk_width}
    \frac{1}{M} \sum_{j=1}^M \frac{U_j^b - L_j^b}{M-1}.
\end{equation}

At the leaderboard level, we average the rank PIs of all models:
\begin{equation}\label{eq:app_ldb_width}
    \frac{1}{M} \sum_{j=1}^M \frac{U_j - L_j}{M-1}.
\end{equation}

\paragraph{Task-level rank CIs coverage}
For each task $t_b$, coverage is defined as the proportion of models for which the true rank falls within its corresponding interval:
\begin{equation}\label{eq:app_tsk_cover}
    \frac{1}{M} \sum_{j=1}^M \big(r^b_j \in [L^b_j, U^b_j]\big).
\end{equation}
A valid set of task-level rank CIs has coverage of $1 - \alpha_{tsk}$. As long as tasks are marginally covered, a lower coverage is better.

\paragraph{Leaderboard-level rank PIs coverage}
To evaluate leaderboard-level rank PIs, we calculate for each model $c_j$ the proportion of $N$ sampled tasks where the true rank is covered:
\begin{equation}\label{eq:app_ldb_cover_model}
    coverage_j = \frac{1}{N} \sum_{b=1}^N \big(r^b_j \in [L_j, U_j]\big).
\end{equation}
Then, we summarize this per-model coverage measure as:
\begin{itemize}
    \item A binary indicator of whether all models achieve at least $1 - \alpha_{ldb}$ coverage:
    \begin{equation}\label{eq:app_ldb_cover_all}
        1 \text{ if } \{ coverage_j \geq (1 - \alpha_{ldb}) \forall j \}.  
    \end{equation}
    \item Average coverage:
    \begin{equation}\label{eq:app_ldb_cover_avg}
        \frac{1}{M} \sum_{j=1}^M coverage_j.
    \end{equation}
\end{itemize}

Finally, we compute and report the average coverage on the unseen tasks, $t_{N+1}, \ldots, t_{N+100}$, to evaluate PI coverage:
\begin{equation}\label{eq:app_ldb_cover_unseen}
    \frac{1}{M}\sum_{j=1}^M \left(\frac{1}{100}\sum_{b^*=N+1}^{N+100} (r_j^{b^*} \in [L_j, U_j])\right).
\end{equation}

\subsection{Task-level simulations}

In addition to the full framework simulations, we also analyze Algorithm~\ref{alg:task_ci}'s performance on a single task, and compare it to a bootstrap baseline as described by Algorithm~\ref{alg:task_bootstrap}.

\begin{algorithm}[ht]
  \caption{Bootstrap Rank Aggregation}
  \label{alg:task_bootstrap}
  \begin{algorithmic}[1]
    \STATE {\bfseries Input:} \\
    \qquad Observed performance matrix $\basematrix$; \\
    \qquad Number of bootstrap repetitions $B$; \\
    \qquad Task-level miscoverage rate $\alpha_{tsk}$.

    \STATE $k_l \gets \left\lfloor B\left(\frac{\alpha_{tsk}}{2}\right)\right\rfloor$; $k_u \gets \left\lceil B\left(1-\frac{\alpha_{tsk}}{2}\right) \right\rceil$
    \STATE $ranks = []$ \COMMENT{Initialized to store B ranks.}
    \FOR{$i=1$ {\bfseries to} $B$}
        \STATE Sample $\basematrix$ with replacement.
        \STATE Compute $\score(i)$ \COMMENT{The observed scores for all models, based on the $i$'th bootstrap sample.}
        \STATE $ranks[i] \gets sorted(\score(i))$
        \COMMENT{Ranks are assigned accounting for ties and indexed by model.}
     \ENDFOR   
        
    \FOR{$j=1$ {\bfseries to} $M$}
        \STATE $ranks_j \gets \{ranks[i]_j : i = 1, ..., B\}$ \COMMENT{All ranks of the $j$'th model across $B$ bootstraps.}
        \STATE $L_j^{bootstrap} \gets ranks_j[k_l]$; $U_j^{bootstrap} \gets ranks_j[k_u]$
    \ENDFOR                
    \STATE Return $\big\{[L_1^{bootstrap}, U_1^{bootstrap}], \ldots, [L_M^{bootstrap}, U_M^{bootstrap}]\big\}$
  \end{algorithmic}
\end{algorithm}

For each parameter configuration, we repeatedly ($n\_repetitions=500$) sample an observed performance matrix and construct both FWER (Algorithm~\ref{alg:task_ci}) and bootstrap rank CIs. We then compute the normalized width (Equation~\ref{eq:app_tsk_width}) and coverage (Equation~\ref{eq:app_tsk_cover}) of the intervals.
\section{Experiments - additional results}\label{app:synthetic_add}

\subsection{Comparison of quantile and union leaderboard-level rank PIs}
We present additional results in Tables~\ref{tab:no_ties_no_corr}-\ref{tab:ties_corr}. We can see that our method keeps the required coverage rate with far greater efficiency compared to the union bound.

\expandafter\providecommand\csname NoTiesNoCorr\endcsname{%
\begin{tabular}{@{}cccccccccccc@{}}
\toprule
\multirow{3}{*}{\# Models} & \multirow{3}{*}{\# Tasks} & \multirow{3}{*}{\# Base values} & \multirow{3}{*}{$\sigma$} & \multicolumn{4}{c}{$\alpha_{ldb}=0.3$} & \multicolumn{4}{c}{$\alpha_{ldb}=0.5$} \\
\cmidrule(lr){5-8}\cmidrule(lr){9-12}
 &  &  &  & \multicolumn{2}{c}{Width mean $\pm$ SD} & \multicolumn{2}{c}{Coverage mean $\pm$ SD} & \multicolumn{2}{c}{Width mean $\pm$ SD} & \multicolumn{2}{c}{Coverage mean $\pm$ SD} \\
\cmidrule(lr){5-6}\cmidrule(lr){7-8}\cmidrule(lr){9-10}\cmidrule(lr){11-12}
 &  &  &  & Union & Quantile & Union & Quantile & Union & Quantile & Union & Quantile \\
\midrule
10 & 20 & 10 & 0.3 & 0.72 $\pm$ 0.03 & 0.57 $\pm$ 0.04 & 1.00 $\pm$ 0.00 & 0.99 $\pm$ 0.00 & 0.72 $\pm$ 0.03 & 0.48 $\pm$ 0.04 & 1.00 $\pm$ 0.00 & 0.98 $\pm$ 0.01 \\
10 & 20 & 10 & 0.7 & 0.98 $\pm$ 0.01 & 0.92 $\pm$ 0.03 & 1.00 $\pm$ 0.00 & 1.00 $\pm$ 0.00 & 0.98 $\pm$ 0.01 & 0.84 $\pm$ 0.04 & 1.00 $\pm$ 0.00 & 0.98 $\pm$ 0.01 \\
10 & 20 & 10 & 1.2 & 1.00 $\pm$ 0.00 & 0.99 $\pm$ 0.01 & 1.00 $\pm$ 0.00 & 1.00 $\pm$ 0.00 & 1.00 $\pm$ 0.00 & 0.95 $\pm$ 0.03 & 1.00 $\pm$ 0.00 & 0.99 $\pm$ 0.01 \\
10 & 20 & 200 & 0.3 & 0.49 $\pm$ 0.01 & 0.32 $\pm$ 0.01 & 0.98 $\pm$ 0.00 & 0.91 $\pm$ 0.01 & 0.49 $\pm$ 0.01 & 0.22 $\pm$ 0.01 & 0.98 $\pm$ 0.00 & 0.82 $\pm$ 0.01 \\
10 & 20 & 200 & 0.7 & 0.83 $\pm$ 0.00 & 0.62 $\pm$ 0.01 & 0.98 $\pm$ 0.00 & 0.90 $\pm$ 0.01 & 0.83 $\pm$ 0.00 & 0.46 $\pm$ 0.01 & 0.98 $\pm$ 0.00 & 0.79 $\pm$ 0.01 \\
10 & 20 & 200 & 1.2 & 0.94 $\pm$ 0.00 & 0.77 $\pm$ 0.01 & 0.98 $\pm$ 0.00 & 0.90 $\pm$ 0.01 & 0.94 $\pm$ 0.00 & 0.60 $\pm$ 0.01 & 0.98 $\pm$ 0.00 & 0.78 $\pm$ 0.01 \\
10 & 60 & 10 & 0.3 & 0.80 $\pm$ 0.03 & 0.56 $\pm$ 0.04 & 1.00 $\pm$ 0.00 & 0.99 $\pm$ 0.00 & 0.80 $\pm$ 0.03 & 0.47 $\pm$ 0.04 & 1.00 $\pm$ 0.00 & 0.98 $\pm$ 0.01 \\
10 & 60 & 10 & 0.7 & 1.00 $\pm$ 0.00 & 0.91 $\pm$ 0.03 & 1.00 $\pm$ 0.00 & 1.00 $\pm$ 0.00 & 1.00 $\pm$ 0.00 & 0.83 $\pm$ 0.05 & 1.00 $\pm$ 0.00 & 0.99 $\pm$ 0.01 \\
10 & 60 & 10 & 1.2 & 1.00 $\pm$ 0.00 & 0.99 $\pm$ 0.01 & 1.00 $\pm$ 0.00 & 1.00 $\pm$ 0.00 & 1.00 $\pm$ 0.00 & 0.96 $\pm$ 0.02 & 1.00 $\pm$ 0.00 & 0.99 $\pm$ 0.01 \\
10 & 60 & 200 & 0.3 & 0.59 $\pm$ 0.01 & 0.31 $\pm$ 0.01 & 0.99 $\pm$ 0.00 & 0.91 $\pm$ 0.01 & 0.59 $\pm$ 0.01 & 0.21 $\pm$ 0.01 & 0.99 $\pm$ 0.00 & 0.82 $\pm$ 0.01 \\
10 & 60 & 200 & 0.7 & 0.93 $\pm$ 0.00 & 0.61 $\pm$ 0.01 & 1.00 $\pm$ 0.00 & 0.90 $\pm$ 0.01 & 0.93 $\pm$ 0.00 & 0.45 $\pm$ 0.01 & 1.00 $\pm$ 0.00 & 0.79 $\pm$ 0.01 \\
10 & 60 & 200 & 1.2 & 0.99 $\pm$ 0.00 & 0.76 $\pm$ 0.01 & 1.00 $\pm$ 0.00 & 0.91 $\pm$ 0.01 & 0.99 $\pm$ 0.00 & 0.59 $\pm$ 0.01 & 1.00 $\pm$ 0.00 & 0.78 $\pm$ 0.01 \\
30 & 20 & 10 & 0.3 & 0.48 $\pm$ 0.03 & 0.38 $\pm$ 0.03 & 1.00 $\pm$ 0.00 & 0.99 $\pm$ 0.00 & 0.48 $\pm$ 0.03 & 0.33 $\pm$ 0.03 & 1.00 $\pm$ 0.00 & 0.99 $\pm$ 0.01 \\
30 & 20 & 10 & 0.7 & 0.84 $\pm$ 0.03 & 0.73 $\pm$ 0.04 & 1.00 $\pm$ 0.00 & 1.00 $\pm$ 0.00 & 0.84 $\pm$ 0.03 & 0.66 $\pm$ 0.05 & 1.00 $\pm$ 0.00 & 0.99 $\pm$ 0.01 \\
30 & 20 & 10 & 1.2 & 0.97 $\pm$ 0.01 & 0.92 $\pm$ 0.03 & 1.00 $\pm$ 0.00 & 1.00 $\pm$ 0.00 & 0.97 $\pm$ 0.01 & 0.86 $\pm$ 0.04 & 1.00 $\pm$ 0.00 & 0.99 $\pm$ 0.01 \\
30 & 20 & 200 & 0.3 & 0.29 $\pm$ 0.00 & 0.19 $\pm$ 0.00 & 0.97 $\pm$ 0.00 & 0.88 $\pm$ 0.01 & 0.29 $\pm$ 0.00 & 0.14 $\pm$ 0.00 & 0.97 $\pm$ 0.00 & 0.78 $\pm$ 0.01 \\
30 & 20 & 200 & 0.7 & 0.58 $\pm$ 0.01 & 0.41 $\pm$ 0.01 & 0.97 $\pm$ 0.00 & 0.87 $\pm$ 0.01 & 0.58 $\pm$ 0.01 & 0.30 $\pm$ 0.01 & 0.97 $\pm$ 0.00 & 0.74 $\pm$ 0.01 \\
30 & 20 & 200 & 1.2 & 0.78 $\pm$ 0.01 & 0.59 $\pm$ 0.01 & 0.97 $\pm$ 0.00 & 0.87 $\pm$ 0.01 & 0.78 $\pm$ 0.01 & 0.45 $\pm$ 0.01 & 0.97 $\pm$ 0.00 & 0.74 $\pm$ 0.01 \\
30 & 60 & 10 & 0.3 & 0.54 $\pm$ 0.02 & 0.37 $\pm$ 0.03 & 1.00 $\pm$ 0.00 & 1.00 $\pm$ 0.00 & 0.54 $\pm$ 0.02 & 0.32 $\pm$ 0.03 & 1.00 $\pm$ 0.00 & 0.99 $\pm$ 0.01 \\
30 & 60 & 10 & 0.7 & 0.90 $\pm$ 0.02 & 0.73 $\pm$ 0.04 & 1.00 $\pm$ 0.00 & 1.00 $\pm$ 0.00 & 0.90 $\pm$ 0.02 & 0.65 $\pm$ 0.05 & 1.00 $\pm$ 0.00 & 0.99 $\pm$ 0.01 \\
30 & 60 & 10 & 1.2 & 0.99 $\pm$ 0.00 & 0.92 $\pm$ 0.03 & 1.00 $\pm$ 0.00 & 1.00 $\pm$ 0.00 & 0.99 $\pm$ 0.00 & 0.86 $\pm$ 0.04 & 1.00 $\pm$ 0.00 & 0.99 $\pm$ 0.00 \\
30 & 60 & 200 & 0.3 & 0.35 $\pm$ 0.00 & 0.18 $\pm$ 0.00 & 0.99 $\pm$ 0.00 & 0.88 $\pm$ 0.01 & 0.35 $\pm$ 0.00 & 0.13 $\pm$ 0.00 & 0.99 $\pm$ 0.00 & 0.78 $\pm$ 0.01 \\
30 & 60 & 200 & 0.7 & 0.68 $\pm$ 0.00 & 0.39 $\pm$ 0.01 & 0.99 $\pm$ 0.00 & 0.87 $\pm$ 0.01 & 0.68 $\pm$ 0.00 & 0.29 $\pm$ 0.01 & 0.99 $\pm$ 0.00 & 0.74 $\pm$ 0.01 \\
30 & 60 & 200 & 1.2 & 0.88 $\pm$ 0.00 & 0.57 $\pm$ 0.01 & 0.99 $\pm$ 0.00 & 0.87 $\pm$ 0.01 & 0.88 $\pm$ 0.00 & 0.43 $\pm$ 0.01 & 0.99 $\pm$ 0.00 & 0.73 $\pm$ 0.01 \\
\bottomrule
\end{tabular}
}

\expandafter\providecommand\csname TiesNoCorr\endcsname{%
\begin{tabular}{@{}cccccccccccc@{}}
\toprule
\multirow{3}{*}{\# Models} & \multirow{3}{*}{\# Tasks} & \multirow{3}{*}{\# Base values} & \multirow{3}{*}{$\sigma$} & \multicolumn{4}{c}{$\alpha_{ldb}=0.3$} & \multicolumn{4}{c}{$\alpha_{ldb}=0.5$} \\
\cmidrule(lr){5-8}\cmidrule(lr){9-12}
 &  &  &  & \multicolumn{2}{c}{Width mean $\pm$ SD} & \multicolumn{2}{c}{Coverage mean $\pm$ SD} & \multicolumn{2}{c}{Width mean $\pm$ SD} & \multicolumn{2}{c}{Coverage mean $\pm$ SD} \\
\cmidrule(lr){5-6}\cmidrule(lr){7-8}\cmidrule(lr){9-10}\cmidrule(lr){11-12}
 &  &  &  & Union & Quantile & Union & Quantile & Union & Quantile & Union & Quantile \\
\midrule
10 & 20 & 10 & 0.3 & 0.83 $\pm$ 0.11 & 0.71 $\pm$ 0.15 & 0.99 $\pm$ 0.01 & 0.95 $\pm$ 0.03 & 0.83 $\pm$ 0.11 & 0.62 $\pm$ 0.16 & 0.99 $\pm$ 0.01 & 0.90 $\pm$ 0.05 \\
10 & 20 & 10 & 0.7 & 0.99 $\pm$ 0.01 & 0.95 $\pm$ 0.04 & 1.00 $\pm$ 0.00 & 0.99 $\pm$ 0.01 & 0.99 $\pm$ 0.01 & 0.89 $\pm$ 0.06 & 1.00 $\pm$ 0.00 & 0.96 $\pm$ 0.02 \\
10 & 20 & 10 & 1.2 & 1.00 $\pm$ 0.00 & 0.99 $\pm$ 0.01 & 1.00 $\pm$ 0.00 & 1.00 $\pm$ 0.00 & 1.00 $\pm$ 0.00 & 0.97 $\pm$ 0.03 & 1.00 $\pm$ 0.00 & 0.98 $\pm$ 0.01 \\
10 & 20 & 200 & 0.3 & 0.73 $\pm$ 0.20 & 0.60 $\pm$ 0.25 & 0.98 $\pm$ 0.01 & 0.91 $\pm$ 0.02 & 0.73 $\pm$ 0.20 & 0.51 $\pm$ 0.26 & 0.98 $\pm$ 0.01 & 0.81 $\pm$ 0.04 \\
10 & 20 & 200 & 0.7 & 0.91 $\pm$ 0.07 & 0.78 $\pm$ 0.15 & 0.98 $\pm$ 0.01 & 0.91 $\pm$ 0.03 & 0.91 $\pm$ 0.07 & 0.67 $\pm$ 0.20 & 0.98 $\pm$ 0.01 & 0.80 $\pm$ 0.05 \\
10 & 20 & 200 & 1.2 & 0.97 $\pm$ 0.03 & 0.87 $\pm$ 0.09 & 0.99 $\pm$ 0.01 & 0.92 $\pm$ 0.04 & 0.97 $\pm$ 0.03 & 0.76 $\pm$ 0.15 & 0.99 $\pm$ 0.01 & 0.80 $\pm$ 0.06 \\
10 & 60 & 10 & 0.3 & 0.88 $\pm$ 0.08 & 0.70 $\pm$ 0.16 & 1.00 $\pm$ 0.00 & 0.95 $\pm$ 0.03 & 0.88 $\pm$ 0.08 & 0.61 $\pm$ 0.16 & 1.00 $\pm$ 0.00 & 0.90 $\pm$ 0.06 \\
10 & 60 & 10 & 0.7 & 1.00 $\pm$ 0.00 & 0.94 $\pm$ 0.04 & 1.00 $\pm$ 0.00 & 0.99 $\pm$ 0.01 & 1.00 $\pm$ 0.00 & 0.88 $\pm$ 0.07 & 1.00 $\pm$ 0.00 & 0.97 $\pm$ 0.02 \\
10 & 60 & 10 & 1.2 & 1.00 $\pm$ 0.00 & 0.99 $\pm$ 0.01 & 1.00 $\pm$ 0.00 & 1.00 $\pm$ 0.00 & 1.00 $\pm$ 0.00 & 0.97 $\pm$ 0.02 & 1.00 $\pm$ 0.00 & 0.99 $\pm$ 0.01 \\
10 & 60 & 200 & 0.3 & 0.80 $\pm$ 0.17 & 0.59 $\pm$ 0.26 & 0.99 $\pm$ 0.00 & 0.91 $\pm$ 0.02 & 0.80 $\pm$ 0.17 & 0.50 $\pm$ 0.26 & 0.99 $\pm$ 0.00 & 0.81 $\pm$ 0.04 \\
10 & 60 & 200 & 0.7 & 0.96 $\pm$ 0.03 & 0.77 $\pm$ 0.16 & 1.00 $\pm$ 0.00 & 0.91 $\pm$ 0.04 & 0.96 $\pm$ 0.03 & 0.66 $\pm$ 0.20 & 1.00 $\pm$ 0.00 & 0.80 $\pm$ 0.05 \\
10 & 60 & 200 & 1.2 & 0.99 $\pm$ 0.00 & 0.86 $\pm$ 0.10 & 1.00 $\pm$ 0.00 & 0.93 $\pm$ 0.04 & 0.99 $\pm$ 0.00 & 0.75 $\pm$ 0.15 & 1.00 $\pm$ 0.00 & 0.80 $\pm$ 0.07 \\
30 & 20 & 10 & 0.3 & 0.66 $\pm$ 0.18 & 0.57 $\pm$ 0.19 & 0.98 $\pm$ 0.01 & 0.93 $\pm$ 0.05 & 0.66 $\pm$ 0.18 & 0.52 $\pm$ 0.20 & 0.98 $\pm$ 0.01 & 0.87 $\pm$ 0.08 \\
30 & 20 & 10 & 0.7 & 0.90 $\pm$ 0.06 & 0.80 $\pm$ 0.08 & 0.99 $\pm$ 0.01 & 0.95 $\pm$ 0.04 & 0.90 $\pm$ 0.06 & 0.74 $\pm$ 0.10 & 0.99 $\pm$ 0.01 & 0.90 $\pm$ 0.09 \\
30 & 20 & 10 & 1.2 & 0.98 $\pm$ 0.01 & 0.94 $\pm$ 0.03 & 1.00 $\pm$ 0.00 & 0.98 $\pm$ 0.02 & 0.98 $\pm$ 0.01 & 0.89 $\pm$ 0.04 & 1.00 $\pm$ 0.00 & 0.95 $\pm$ 0.05 \\
30 & 20 & 200 & 0.3 & 0.58 $\pm$ 0.25 & 0.49 $\pm$ 0.26 & 0.96 $\pm$ 0.01 & 0.87 $\pm$ 0.02 & 0.58 $\pm$ 0.25 & 0.43 $\pm$ 0.27 & 0.96 $\pm$ 0.01 & 0.76 $\pm$ 0.04 \\
30 & 20 & 200 & 0.7 & 0.77 $\pm$ 0.17 & 0.64 $\pm$ 0.21 & 0.96 $\pm$ 0.01 & 0.85 $\pm$ 0.02 & 0.77 $\pm$ 0.17 & 0.55 $\pm$ 0.23 & 0.96 $\pm$ 0.01 & 0.70 $\pm$ 0.03 \\
30 & 20 & 200 & 1.2 & 0.87 $\pm$ 0.09 & 0.75 $\pm$ 0.15 & 0.96 $\pm$ 0.01 & 0.85 $\pm$ 0.02 & 0.87 $\pm$ 0.09 & 0.65 $\pm$ 0.19 & 0.96 $\pm$ 0.01 & 0.70 $\pm$ 0.03 \\
30 & 60 & 10 & 0.3 & 0.72 $\pm$ 0.17 & 0.56 $\pm$ 0.19 & 0.99 $\pm$ 0.00 & 0.93 $\pm$ 0.05 & 0.72 $\pm$ 0.17 & 0.51 $\pm$ 0.20 & 0.99 $\pm$ 0.00 & 0.88 $\pm$ 0.08 \\
30 & 60 & 10 & 0.7 & 0.94 $\pm$ 0.04 & 0.80 $\pm$ 0.09 & 1.00 $\pm$ 0.00 & 0.96 $\pm$ 0.04 & 0.94 $\pm$ 0.04 & 0.73 $\pm$ 0.10 & 1.00 $\pm$ 0.00 & 0.90 $\pm$ 0.09 \\
30 & 60 & 10 & 1.2 & 1.00 $\pm$ 0.00 & 0.94 $\pm$ 0.03 & 1.00 $\pm$ 0.00 & 0.98 $\pm$ 0.02 & 1.00 $\pm$ 0.00 & 0.89 $\pm$ 0.05 & 1.00 $\pm$ 0.00 & 0.96 $\pm$ 0.05 \\
30 & 60 & 200 & 0.3 & 0.64 $\pm$ 0.24 & 0.48 $\pm$ 0.26 & 0.99 $\pm$ 0.00 & 0.87 $\pm$ 0.02 & 0.64 $\pm$ 0.24 & 0.43 $\pm$ 0.27 & 0.99 $\pm$ 0.00 & 0.76 $\pm$ 0.05 \\
30 & 60 & 200 & 0.7 & 0.84 $\pm$ 0.13 & 0.63 $\pm$ 0.22 & 0.99 $\pm$ 0.00 & 0.85 $\pm$ 0.02 & 0.84 $\pm$ 0.13 & 0.54 $\pm$ 0.23 & 0.99 $\pm$ 0.00 & 0.70 $\pm$ 0.03 \\
30 & 60 & 200 & 1.2 & 0.93 $\pm$ 0.05 & 0.74 $\pm$ 0.15 & 0.99 $\pm$ 0.00 & 0.85 $\pm$ 0.02 & 0.93 $\pm$ 0.05 & 0.64 $\pm$ 0.19 & 0.99 $\pm$ 0.00 & 0.69 $\pm$ 0.03 \\
\bottomrule
\end{tabular}
}

\expandafter\providecommand\csname NoTiesCorr\endcsname{%
\begin{tabular}{@{}cccccccccccc@{}}
\toprule
\multirow{3}{*}{\# Models} & \multirow{3}{*}{\# Tasks} & \multirow{3}{*}{\# Base values} & \multirow{3}{*}{$\sigma$} & \multicolumn{4}{c}{$\alpha_{ldb}=0.3$} & \multicolumn{4}{c}{$\alpha_{ldb}=0.5$} \\
\cmidrule(lr){5-8}\cmidrule(lr){9-12}
 &  &  &  & \multicolumn{2}{c}{Width mean $\pm$ SD} & \multicolumn{2}{c}{Coverage mean $\pm$ SD} & \multicolumn{2}{c}{Width mean $\pm$ SD} & \multicolumn{2}{c}{Coverage mean $\pm$ SD} \\
\cmidrule(lr){5-6}\cmidrule(lr){7-8}\cmidrule(lr){9-10}\cmidrule(lr){11-12}
 &  &  &  & Union & Quantile & Union & Quantile & Union & Quantile & Union & Quantile \\
\midrule
10 & 20 & 10 & 0.3 & 0.73 $\pm$ 0.03 & 0.57 $\pm$ 0.04 & 1.00 $\pm$ 0.00 & 0.99 $\pm$ 0.00 & 0.73 $\pm$ 0.03 & 0.48 $\pm$ 0.04 & 1.00 $\pm$ 0.00 & 0.98 $\pm$ 0.01 \\
10 & 20 & 10 & 0.7 & 0.99 $\pm$ 0.01 & 0.93 $\pm$ 0.02 & 1.00 $\pm$ 0.00 & 1.00 $\pm$ 0.00 & 0.99 $\pm$ 0.01 & 0.86 $\pm$ 0.04 & 1.00 $\pm$ 0.00 & 0.99 $\pm$ 0.01 \\
10 & 20 & 10 & 1.2 & 1.00 $\pm$ 0.00 & 0.99 $\pm$ 0.01 & 1.00 $\pm$ 0.00 & 1.00 $\pm$ 0.00 & 1.00 $\pm$ 0.00 & 0.97 $\pm$ 0.02 & 1.00 $\pm$ 0.00 & 0.99 $\pm$ 0.01 \\
10 & 20 & 200 & 0.3 & 0.47 $\pm$ 0.04 & 0.29 $\pm$ 0.03 & 0.98 $\pm$ 0.00 & 0.92 $\pm$ 0.01 & 0.47 $\pm$ 0.04 & 0.19 $\pm$ 0.03 & 0.98 $\pm$ 0.00 & 0.83 $\pm$ 0.02 \\
10 & 20 & 200 & 0.7 & 0.83 $\pm$ 0.01 & 0.62 $\pm$ 0.02 & 0.98 $\pm$ 0.00 & 0.91 $\pm$ 0.01 & 0.83 $\pm$ 0.01 & 0.46 $\pm$ 0.02 & 0.98 $\pm$ 0.00 & 0.80 $\pm$ 0.02 \\
10 & 20 & 200 & 1.2 & 0.95 $\pm$ 0.01 & 0.79 $\pm$ 0.01 & 0.99 $\pm$ 0.00 & 0.91 $\pm$ 0.01 & 0.95 $\pm$ 0.01 & 0.62 $\pm$ 0.02 & 0.99 $\pm$ 0.00 & 0.80 $\pm$ 0.02 \\
10 & 60 & 10 & 0.3 & 0.80 $\pm$ 0.03 & 0.56 $\pm$ 0.04 & 1.00 $\pm$ 0.00 & 0.99 $\pm$ 0.00 & 0.80 $\pm$ 0.03 & 0.47 $\pm$ 0.04 & 1.00 $\pm$ 0.00 & 0.98 $\pm$ 0.01 \\
10 & 60 & 10 & 0.7 & 1.00 $\pm$ 0.00 & 0.93 $\pm$ 0.03 & 1.00 $\pm$ 0.00 & 1.00 $\pm$ 0.00 & 1.00 $\pm$ 0.00 & 0.86 $\pm$ 0.04 & 1.00 $\pm$ 0.00 & 0.99 $\pm$ 0.01 \\
10 & 60 & 10 & 1.2 & 1.00 $\pm$ 0.00 & 0.99 $\pm$ 0.01 & 1.00 $\pm$ 0.00 & 1.00 $\pm$ 0.00 & 1.00 $\pm$ 0.00 & 0.97 $\pm$ 0.02 & 1.00 $\pm$ 0.00 & 0.99 $\pm$ 0.00 \\
10 & 60 & 200 & 0.3 & 0.57 $\pm$ 0.04 & 0.27 $\pm$ 0.03 & 0.99 $\pm$ 0.00 & 0.92 $\pm$ 0.01 & 0.57 $\pm$ 0.04 & 0.18 $\pm$ 0.03 & 0.99 $\pm$ 0.00 & 0.83 $\pm$ 0.02 \\
10 & 60 & 200 & 0.7 & 0.93 $\pm$ 0.01 & 0.60 $\pm$ 0.02 & 1.00 $\pm$ 0.00 & 0.91 $\pm$ 0.01 & 0.93 $\pm$ 0.01 & 0.44 $\pm$ 0.02 & 1.00 $\pm$ 0.00 & 0.80 $\pm$ 0.02 \\
10 & 60 & 200 & 1.2 & 0.99 $\pm$ 0.00 & 0.78 $\pm$ 0.01 & 1.00 $\pm$ 0.00 & 0.92 $\pm$ 0.01 & 0.99 $\pm$ 0.00 & 0.60 $\pm$ 0.02 & 1.00 $\pm$ 0.00 & 0.79 $\pm$ 0.02 \\
30 & 20 & 10 & 0.3 & 0.49 $\pm$ 0.02 & 0.39 $\pm$ 0.02 & 1.00 $\pm$ 0.00 & 1.00 $\pm$ 0.00 & 0.49 $\pm$ 0.02 & 0.33 $\pm$ 0.02 & 1.00 $\pm$ 0.00 & 0.99 $\pm$ 0.01 \\
30 & 20 & 10 & 0.7 & 0.85 $\pm$ 0.02 & 0.74 $\pm$ 0.03 & 1.00 $\pm$ 0.00 & 1.00 $\pm$ 0.00 & 0.85 $\pm$ 0.02 & 0.66 $\pm$ 0.04 & 1.00 $\pm$ 0.00 & 0.99 $\pm$ 0.01 \\
30 & 20 & 10 & 1.2 & 0.98 $\pm$ 0.01 & 0.92 $\pm$ 0.02 & 1.00 $\pm$ 0.00 & 1.00 $\pm$ 0.00 & 0.98 $\pm$ 0.01 & 0.87 $\pm$ 0.03 & 1.00 $\pm$ 0.00 & 0.99 $\pm$ 0.00 \\
30 & 20 & 200 & 0.3 & 0.29 $\pm$ 0.01 & 0.19 $\pm$ 0.01 & 0.97 $\pm$ 0.00 & 0.89 $\pm$ 0.01 & 0.29 $\pm$ 0.01 & 0.13 $\pm$ 0.01 & 0.97 $\pm$ 0.00 & 0.79 $\pm$ 0.02 \\
30 & 20 & 200 & 0.7 & 0.59 $\pm$ 0.01 & 0.41 $\pm$ 0.01 & 0.97 $\pm$ 0.00 & 0.87 $\pm$ 0.01 & 0.59 $\pm$ 0.01 & 0.30 $\pm$ 0.01 & 0.97 $\pm$ 0.00 & 0.75 $\pm$ 0.01 \\
30 & 20 & 200 & 1.2 & 0.79 $\pm$ 0.01 & 0.59 $\pm$ 0.01 & 0.97 $\pm$ 0.00 & 0.87 $\pm$ 0.01 & 0.79 $\pm$ 0.01 & 0.45 $\pm$ 0.01 & 0.97 $\pm$ 0.00 & 0.74 $\pm$ 0.01 \\
30 & 60 & 10 & 0.3 & 0.55 $\pm$ 0.02 & 0.38 $\pm$ 0.02 & 1.00 $\pm$ 0.00 & 1.00 $\pm$ 0.00 & 0.55 $\pm$ 0.02 & 0.33 $\pm$ 0.03 & 1.00 $\pm$ 0.00 & 0.99 $\pm$ 0.01 \\
30 & 60 & 10 & 0.7 & 0.91 $\pm$ 0.02 & 0.74 $\pm$ 0.03 & 1.00 $\pm$ 0.00 & 1.00 $\pm$ 0.00 & 0.91 $\pm$ 0.02 & 0.66 $\pm$ 0.04 & 1.00 $\pm$ 0.00 & 0.99 $\pm$ 0.00 \\
30 & 60 & 10 & 1.2 & 1.00 $\pm$ 0.00 & 0.92 $\pm$ 0.02 & 1.00 $\pm$ 0.00 & 1.00 $\pm$ 0.00 & 1.00 $\pm$ 0.00 & 0.87 $\pm$ 0.03 & 1.00 $\pm$ 0.00 & 0.99 $\pm$ 0.00 \\
30 & 60 & 200 & 0.3 & 0.36 $\pm$ 0.01 & 0.18 $\pm$ 0.01 & 0.99 $\pm$ 0.00 & 0.89 $\pm$ 0.01 & 0.36 $\pm$ 0.01 & 0.13 $\pm$ 0.01 & 0.99 $\pm$ 0.00 & 0.79 $\pm$ 0.01 \\
30 & 60 & 200 & 0.7 & 0.69 $\pm$ 0.01 & 0.40 $\pm$ 0.01 & 0.99 $\pm$ 0.00 & 0.88 $\pm$ 0.01 & 0.69 $\pm$ 0.01 & 0.29 $\pm$ 0.01 & 0.99 $\pm$ 0.00 & 0.75 $\pm$ 0.01 \\
30 & 60 & 200 & 1.2 & 0.89 $\pm$ 0.01 & 0.58 $\pm$ 0.01 & 0.99 $\pm$ 0.00 & 0.87 $\pm$ 0.01 & 0.89 $\pm$ 0.01 & 0.44 $\pm$ 0.01 & 0.99 $\pm$ 0.00 & 0.74 $\pm$ 0.01 \\
\bottomrule
\end{tabular}
}

\expandafter\providecommand\csname TiesCorr\endcsname{%
\begin{tabular}{@{}cccccccccccc@{}}
\toprule
\multirow{3}{*}{\# Models} & \multirow{3}{*}{\# Tasks} & \multirow{3}{*}{\# Base values} & \multirow{3}{*}{$\sigma$} & \multicolumn{4}{c}{$\alpha_{ldb}=0.3$} & \multicolumn{4}{c}{$\alpha_{ldb}=0.5$} \\
\cmidrule(lr){5-8}\cmidrule(lr){9-12}
 &  &  &  & \multicolumn{2}{c}{Width mean $\pm$ SD} & \multicolumn{2}{c}{Coverage mean $\pm$ SD} & \multicolumn{2}{c}{Width mean $\pm$ SD} & \multicolumn{2}{c}{Coverage mean $\pm$ SD} \\
\cmidrule(lr){5-6}\cmidrule(lr){7-8}\cmidrule(lr){9-10}\cmidrule(lr){11-12}
 &  &  &  & Union & Quantile & Union & Quantile & Union & Quantile & Union & Quantile \\
\midrule
10 & 20 & 10 & 0.3 & 0.82 $\pm$ 0.10 & 0.70 $\pm$ 0.13 & 0.99 $\pm$ 0.01 & 0.96 $\pm$ 0.02 & 0.82 $\pm$ 0.10 & 0.62 $\pm$ 0.15 & 0.99 $\pm$ 0.01 & 0.92 $\pm$ 0.05 \\
10 & 20 & 10 & 0.7 & 0.99 $\pm$ 0.01 & 0.95 $\pm$ 0.03 & 1.00 $\pm$ 0.00 & 0.99 $\pm$ 0.01 & 0.99 $\pm$ 0.01 & 0.90 $\pm$ 0.05 & 1.00 $\pm$ 0.00 & 0.97 $\pm$ 0.02 \\
10 & 20 & 10 & 1.2 & 1.00 $\pm$ 0.00 & 0.99 $\pm$ 0.01 & 1.00 $\pm$ 0.00 & 1.00 $\pm$ 0.00 & 1.00 $\pm$ 0.00 & 0.98 $\pm$ 0.02 & 1.00 $\pm$ 0.00 & 0.99 $\pm$ 0.01 \\
10 & 20 & 200 & 0.3 & 0.71 $\pm$ 0.20 & 0.57 $\pm$ 0.24 & 0.98 $\pm$ 0.01 & 0.91 $\pm$ 0.02 & 0.71 $\pm$ 0.20 & 0.48 $\pm$ 0.25 & 0.98 $\pm$ 0.01 & 0.82 $\pm$ 0.04 \\
10 & 20 & 200 & 0.7 & 0.91 $\pm$ 0.07 & 0.77 $\pm$ 0.14 & 0.98 $\pm$ 0.01 & 0.91 $\pm$ 0.03 & 0.91 $\pm$ 0.07 & 0.66 $\pm$ 0.19 & 0.98 $\pm$ 0.01 & 0.80 $\pm$ 0.05 \\
10 & 20 & 200 & 1.2 & 0.97 $\pm$ 0.02 & 0.88 $\pm$ 0.08 & 0.99 $\pm$ 0.01 & 0.93 $\pm$ 0.03 & 0.97 $\pm$ 0.02 & 0.76 $\pm$ 0.14 & 0.99 $\pm$ 0.01 & 0.81 $\pm$ 0.05 \\
10 & 60 & 10 & 0.3 & 0.88 $\pm$ 0.08 & 0.69 $\pm$ 0.14 & 1.00 $\pm$ 0.00 & 0.96 $\pm$ 0.03 & 0.88 $\pm$ 0.08 & 0.61 $\pm$ 0.15 & 1.00 $\pm$ 0.00 & 0.92 $\pm$ 0.05 \\
10 & 60 & 10 & 0.7 & 1.00 $\pm$ 0.00 & 0.96 $\pm$ 0.03 & 1.00 $\pm$ 0.00 & 0.99 $\pm$ 0.00 & 1.00 $\pm$ 0.00 & 0.90 $\pm$ 0.06 & 1.00 $\pm$ 0.00 & 0.98 $\pm$ 0.02 \\
10 & 60 & 10 & 1.2 & 1.00 $\pm$ 0.00 & 1.00 $\pm$ 0.00 & 1.00 $\pm$ 0.00 & 1.00 $\pm$ 0.00 & 1.00 $\pm$ 0.00 & 0.98 $\pm$ 0.02 & 1.00 $\pm$ 0.00 & 0.99 $\pm$ 0.01 \\
10 & 60 & 200 & 0.3 & 0.77 $\pm$ 0.17 & 0.56 $\pm$ 0.24 & 0.99 $\pm$ 0.00 & 0.91 $\pm$ 0.02 & 0.77 $\pm$ 0.17 & 0.47 $\pm$ 0.25 & 0.99 $\pm$ 0.00 & 0.82 $\pm$ 0.04 \\
10 & 60 & 200 & 0.7 & 0.96 $\pm$ 0.03 & 0.77 $\pm$ 0.15 & 1.00 $\pm$ 0.00 & 0.92 $\pm$ 0.03 & 0.96 $\pm$ 0.03 & 0.65 $\pm$ 0.20 & 1.00 $\pm$ 0.00 & 0.81 $\pm$ 0.05 \\
10 & 60 & 200 & 1.2 & 1.00 $\pm$ 0.00 & 0.87 $\pm$ 0.09 & 1.00 $\pm$ 0.00 & 0.93 $\pm$ 0.04 & 1.00 $\pm$ 0.00 & 0.75 $\pm$ 0.14 & 1.00 $\pm$ 0.00 & 0.81 $\pm$ 0.06 \\
30 & 20 & 10 & 0.3 & 0.67 $\pm$ 0.16 & 0.57 $\pm$ 0.17 & 0.98 $\pm$ 0.01 & 0.93 $\pm$ 0.04 & 0.67 $\pm$ 0.16 & 0.52 $\pm$ 0.18 & 0.98 $\pm$ 0.01 & 0.88 $\pm$ 0.08 \\
30 & 20 & 10 & 0.7 & 0.90 $\pm$ 0.05 & 0.81 $\pm$ 0.08 & 0.99 $\pm$ 0.01 & 0.96 $\pm$ 0.04 & 0.90 $\pm$ 0.05 & 0.74 $\pm$ 0.09 & 0.99 $\pm$ 0.01 & 0.90 $\pm$ 0.08 \\
30 & 20 & 10 & 1.2 & 0.99 $\pm$ 0.01 & 0.95 $\pm$ 0.03 & 1.00 $\pm$ 0.00 & 0.99 $\pm$ 0.01 & 0.99 $\pm$ 0.01 & 0.90 $\pm$ 0.04 & 1.00 $\pm$ 0.00 & 0.96 $\pm$ 0.04 \\
30 & 20 & 200 & 0.3 & 0.59 $\pm$ 0.24 & 0.49 $\pm$ 0.25 & 0.96 $\pm$ 0.01 & 0.87 $\pm$ 0.02 & 0.59 $\pm$ 0.24 & 0.43 $\pm$ 0.25 & 0.96 $\pm$ 0.01 & 0.77 $\pm$ 0.04 \\
30 & 20 & 200 & 0.7 & 0.78 $\pm$ 0.16 & 0.64 $\pm$ 0.20 & 0.96 $\pm$ 0.01 & 0.85 $\pm$ 0.02 & 0.78 $\pm$ 0.16 & 0.55 $\pm$ 0.22 & 0.96 $\pm$ 0.01 & 0.71 $\pm$ 0.03 \\
30 & 20 & 200 & 1.2 & 0.88 $\pm$ 0.08 & 0.75 $\pm$ 0.14 & 0.96 $\pm$ 0.01 & 0.85 $\pm$ 0.02 & 0.88 $\pm$ 0.08 & 0.65 $\pm$ 0.18 & 0.96 $\pm$ 0.01 & 0.70 $\pm$ 0.03 \\
30 & 60 & 10 & 0.3 & 0.72 $\pm$ 0.15 & 0.57 $\pm$ 0.17 & 0.99 $\pm$ 0.00 & 0.93 $\pm$ 0.04 & 0.72 $\pm$ 0.15 & 0.51 $\pm$ 0.18 & 0.99 $\pm$ 0.00 & 0.88 $\pm$ 0.08 \\
30 & 60 & 10 & 0.7 & 0.95 $\pm$ 0.04 & 0.81 $\pm$ 0.08 & 1.00 $\pm$ 0.00 & 0.96 $\pm$ 0.03 & 0.95 $\pm$ 0.04 & 0.74 $\pm$ 0.09 & 1.00 $\pm$ 0.00 & 0.91 $\pm$ 0.08 \\
30 & 60 & 10 & 1.2 & 1.00 $\pm$ 0.00 & 0.95 $\pm$ 0.03 & 1.00 $\pm$ 0.00 & 0.99 $\pm$ 0.01 & 1.00 $\pm$ 0.00 & 0.90 $\pm$ 0.04 & 1.00 $\pm$ 0.00 & 0.96 $\pm$ 0.04 \\
30 & 60 & 200 & 0.3 & 0.65 $\pm$ 0.23 & 0.48 $\pm$ 0.25 & 0.99 $\pm$ 0.00 & 0.87 $\pm$ 0.02 & 0.65 $\pm$ 0.23 & 0.42 $\pm$ 0.26 & 0.99 $\pm$ 0.00 & 0.77 $\pm$ 0.04 \\
30 & 60 & 200 & 0.7 & 0.85 $\pm$ 0.12 & 0.64 $\pm$ 0.20 & 0.99 $\pm$ 0.00 & 0.86 $\pm$ 0.03 & 0.85 $\pm$ 0.12 & 0.55 $\pm$ 0.22 & 0.99 $\pm$ 0.00 & 0.71 $\pm$ 0.04 \\
30 & 60 & 200 & 1.2 & 0.94 $\pm$ 0.04 & 0.74 $\pm$ 0.14 & 0.99 $\pm$ 0.00 & 0.85 $\pm$ 0.02 & 0.94 $\pm$ 0.04 & 0.64 $\pm$ 0.18 & 0.99 $\pm$ 0.00 & 0.70 $\pm$ 0.03 \\
\bottomrule
\end{tabular}
}

\begin{table}[ht]
    \centering
        \caption{Comparison of normalized width and coverage of leaderboard-level PIs for $\alpha_{ldb}=0.3$ and $\alpha_{ldb}=0.5$, without correlations and without ties between models.}
            \label{tab:no_ties_no_corr}
        \resizebox{\textwidth}{!}{%
            \csname NoTiesNoCorr\endcsname
        }
\end{table}

\begin{table}[ht]
    \centering
        \caption{Comparison of normalized width and coverage of leaderboard-level PIs for $\alpha_{ldb}=0.3$ and $\alpha_{ldb}=0.5$, without correlations and with ties between models.}
            \label{tab:ties_no_corr}
        \resizebox{\textwidth}{!}{%
            \csname TiesNoCorr\endcsname
        }
\end{table}

\begin{table}[ht]
    \centering
        \caption{Comparison of normalized width and coverage of leaderboard-level PIs for $\alpha_{ldb}=0.3$ and $\alpha_{ldb}=0.5$ , with correlations and without ties between models.}
            \label{tab:no_ties_corr}
        \resizebox{\textwidth}{!}{%
            \csname NoTiesCorr\endcsname
        }
\end{table}

\begin{table}[ht]
    \centering
        \caption{Comparison of width and coverage of leaderboard-level PIs for $\alpha_{ldb}=0.3$ and $\alpha_{ldb}=0.5$, with correlations and with ties between models.}
            \label{tab:ties_corr}
        \resizebox{\textwidth}{!}{%
            \csname TiesCorr\endcsname
        }
\end{table}

\subsection{Comparison of FWER and bootstrap task-level rank CIs}
In Tables~\ref{tab:task_no_corr} and~\ref{tab:task_corr}, we present the results of the single task simulations, as described in Section~\ref{app:synthetic_desc}. In general, the bootstrap rank CIs are narrower than the FWER rank CIs, but do not maintain coverage guarantee for most simulated configurations.

\expandafter\providecommand\csname TaskNoCorr\endcsname{%
\begin{tabular}{@{}cccccccccccc@{}}
\toprule
\multirow{2}{*}{$1-\alpha_{tsk}$} & \multirow{2}{*}{\# Models} & \multirow{2}{*}{\# Base values} & \multirow{2}{*}{$\sigma$} & \multicolumn{4}{c}{No ties} & \multicolumn{4}{c}{With ties} \\
\cmidrule(lr){5-8}\cmidrule(lr){9-12}
 &  &  &  & Width FWER & Width Bootstrap & Coverage FWER & Coverage Bootstrap & Width FWER & Width Bootstrap & Coverage FWER & Coverage Bootstrap \\
\midrule
0.98 & 10 & 10 & 0.3 & 0.30 $\pm$ 0.00 & 0.16 $\pm$ 0.00 & 1.00 $\pm$ 0.00 & 1.00 $\pm$ 0.00 & 0.42 $\pm$ 0.19 & 0.32 $\pm$ 0.21 & 0.99 $\pm$ 0.01 & 0.78 $\pm$ 0.23 \\
0.98 & 10 & 10 & 0.7 & 0.51 $\pm$ 0.00 & 0.27 $\pm$ 0.00 & 1.00 $\pm$ 0.00 & 0.99 $\pm$ 0.00 & 0.56 $\pm$ 0.08 & 0.38 $\pm$ 0.15 & 0.99 $\pm$ 0.01 & 0.78 $\pm$ 0.23 \\
0.98 & 10 & 10 & 1.2 & 0.63 $\pm$ 0.00 & 0.32 $\pm$ 0.00 & 1.00 $\pm$ 0.00 & 0.99 $\pm$ 0.00 & 0.68 $\pm$ 0.06 & 0.41 $\pm$ 0.13 & 1.00 $\pm$ 0.01 & 0.79 $\pm$ 0.22 \\
0.98 & 10 & 200 & 0.3 & 0.02 $\pm$ 0.00 & 0.02 $\pm$ 0.00 & 1.00 $\pm$ 0.00 & 1.00 $\pm$ 0.00 & 0.29 $\pm$ 0.30 & 0.27 $\pm$ 0.28 & 0.99 $\pm$ 0.01 & 0.86 $\pm$ 0.15 \\
0.98 & 10 & 200 & 0.7 & 0.09 $\pm$ 0.00 & 0.07 $\pm$ 0.00 & 1.00 $\pm$ 0.00 & 1.00 $\pm$ 0.00 & 0.32 $\pm$ 0.28 & 0.29 $\pm$ 0.26 & 0.99 $\pm$ 0.01 & 0.86 $\pm$ 0.16 \\
0.98 & 10 & 200 & 1.2 & 0.08 $\pm$ 0.00 & 0.07 $\pm$ 0.00 & 1.00 $\pm$ 0.00 & 1.00 $\pm$ 0.00 & 0.31 $\pm$ 0.28 & 0.29 $\pm$ 0.26 & 0.99 $\pm$ 0.00 & 0.87 $\pm$ 0.15 \\
0.98 & 30 & 10 & 0.3 & 0.21 $\pm$ 0.00 & 0.10 $\pm$ 0.00 & 1.00 $\pm$ 0.00 & 0.99 $\pm$ 0.00 & 0.39 $\pm$ 0.22 & 0.28 $\pm$ 0.21 & 0.99 $\pm$ 0.01 & 0.61 $\pm$ 0.36 \\
0.98 & 30 & 10 & 0.7 & 0.43 $\pm$ 0.00 & 0.18 $\pm$ 0.00 & 1.00 $\pm$ 0.00 & 0.98 $\pm$ 0.00 & 0.51 $\pm$ 0.12 & 0.31 $\pm$ 0.18 & 0.99 $\pm$ 0.01 & 0.61 $\pm$ 0.36 \\
0.98 & 30 & 10 & 1.2 & 0.57 $\pm$ 0.00 & 0.24 $\pm$ 0.00 & 1.00 $\pm$ 0.00 & 0.98 $\pm$ 0.00 & 0.62 $\pm$ 0.07 & 0.35 $\pm$ 0.15 & 1.00 $\pm$ 0.01 & 0.62 $\pm$ 0.35 \\
0.98 & 30 & 200 & 0.3 & 0.04 $\pm$ 0.00 & 0.03 $\pm$ 0.00 & 1.00 $\pm$ 0.00 & 1.00 $\pm$ 0.00 & 0.31 $\pm$ 0.30 & 0.27 $\pm$ 0.27 & 0.99 $\pm$ 0.00 & 0.65 $\pm$ 0.35 \\
0.98 & 30 & 200 & 0.7 & 0.07 $\pm$ 0.00 & 0.05 $\pm$ 0.00 & 1.00 $\pm$ 0.00 & 1.00 $\pm$ 0.00 & 0.32 $\pm$ 0.29 & 0.28 $\pm$ 0.26 & 0.99 $\pm$ 0.01 & 0.65 $\pm$ 0.35 \\
0.98 & 30 & 200 & 1.2 & 0.09 $\pm$ 0.00 & 0.07 $\pm$ 0.00 & 1.00 $\pm$ 0.00 & 1.00 $\pm$ 0.00 & 0.33 $\pm$ 0.28 & 0.28 $\pm$ 0.25 & 0.99 $\pm$ 0.01 & 0.64 $\pm$ 0.35 \\
\midrule
0.95 & 10 & 10 & 0.3 & 0.25 $\pm$ 0.00 & 0.14 $\pm$ 0.00 & 1.00 $\pm$ 0.00 & 0.99 $\pm$ 0.00 & 0.39 $\pm$ 0.22 & 0.29 $\pm$ 0.20 & 0.98 $\pm$ 0.01 & 0.71 $\pm$ 0.29 \\
0.95 & 10 & 10 & 0.7 & 0.42 $\pm$ 0.00 & 0.23 $\pm$ 0.00 & 1.00 $\pm$ 0.00 & 0.98 $\pm$ 0.00 & 0.50 $\pm$ 0.12 & 0.34 $\pm$ 0.15 & 0.98 $\pm$ 0.02 & 0.71 $\pm$ 0.29 \\
0.95 & 10 & 10 & 1.2 & 0.54 $\pm$ 0.00 & 0.27 $\pm$ 0.00 & 1.00 $\pm$ 0.00 & 0.98 $\pm$ 0.00 & 0.60 $\pm$ 0.09 & 0.36 $\pm$ 0.13 & 0.99 $\pm$ 0.01 & 0.71 $\pm$ 0.28 \\
0.95 & 10 & 200 & 0.3 & 0.02 $\pm$ 0.00 & 0.01 $\pm$ 0.00 & 1.00 $\pm$ 0.00 & 1.00 $\pm$ 0.00 & 0.29 $\pm$ 0.30 & 0.25 $\pm$ 0.26 & 0.99 $\pm$ 0.01 & 0.77 $\pm$ 0.25 \\
0.95 & 10 & 200 & 0.7 & 0.08 $\pm$ 0.00 & 0.06 $\pm$ 0.00 & 1.00 $\pm$ 0.00 & 1.00 $\pm$ 0.00 & 0.31 $\pm$ 0.28 & 0.26 $\pm$ 0.24 & 0.98 $\pm$ 0.02 & 0.76 $\pm$ 0.26 \\
0.95 & 10 & 200 & 1.2 & 0.07 $\pm$ 0.00 & 0.05 $\pm$ 0.00 & 1.00 $\pm$ 0.00 & 1.00 $\pm$ 0.00 & 0.31 $\pm$ 0.28 & 0.26 $\pm$ 0.24 & 0.98 $\pm$ 0.01 & 0.77 $\pm$ 0.25 \\
0.95 & 30 & 10 & 0.3 & 0.18 $\pm$ 0.00 & 0.08 $\pm$ 0.00 & 1.00 $\pm$ 0.00 & 0.97 $\pm$ 0.00 & 0.37 $\pm$ 0.24 & 0.25 $\pm$ 0.20 & 0.99 $\pm$ 0.01 & 0.57 $\pm$ 0.38 \\
0.95 & 30 & 10 & 0.7 & 0.37 $\pm$ 0.00 & 0.15 $\pm$ 0.00 & 1.00 $\pm$ 0.00 & 0.96 $\pm$ 0.00 & 0.47 $\pm$ 0.15 & 0.28 $\pm$ 0.17 & 0.99 $\pm$ 0.01 & 0.57 $\pm$ 0.37 \\
0.95 & 30 & 10 & 1.2 & 0.50 $\pm$ 0.00 & 0.20 $\pm$ 0.00 & 1.00 $\pm$ 0.00 & 0.96 $\pm$ 0.00 & 0.57 $\pm$ 0.09 & 0.31 $\pm$ 0.14 & 0.99 $\pm$ 0.01 & 0.57 $\pm$ 0.37 \\
0.95 & 30 & 200 & 0.3 & 0.03 $\pm$ 0.00 & 0.02 $\pm$ 0.00 & 1.00 $\pm$ 0.00 & 1.00 $\pm$ 0.00 & 0.31 $\pm$ 0.30 & 0.24 $\pm$ 0.24 & 0.99 $\pm$ 0.01 & 0.58 $\pm$ 0.39 \\
0.95 & 30 & 200 & 0.7 & 0.06 $\pm$ 0.00 & 0.04 $\pm$ 0.00 & 1.00 $\pm$ 0.00 & 1.00 $\pm$ 0.00 & 0.32 $\pm$ 0.29 & 0.25 $\pm$ 0.23 & 0.99 $\pm$ 0.01 & 0.57 $\pm$ 0.39 \\
0.95 & 30 & 200 & 1.2 & 0.09 $\pm$ 0.00 & 0.05 $\pm$ 0.00 & 1.00 $\pm$ 0.00 & 0.99 $\pm$ 0.00 & 0.33 $\pm$ 0.28 & 0.25 $\pm$ 0.23 & 0.99 $\pm$ 0.01 & 0.57 $\pm$ 0.39 \\
\midrule
0.90 & 10 & 10 & 0.3 & 0.21 $\pm$ 0.00 & 0.11 $\pm$ 0.00 & 1.00 $\pm$ 0.00 & 0.98 $\pm$ 0.00 & 0.37 $\pm$ 0.23 & 0.26 $\pm$ 0.18 & 0.97 $\pm$ 0.03 & 0.64 $\pm$ 0.34 \\
0.90 & 10 & 10 & 0.7 & 0.37 $\pm$ 0.00 & 0.19 $\pm$ 0.00 & 1.00 $\pm$ 0.00 & 0.96 $\pm$ 0.00 & 0.46 $\pm$ 0.15 & 0.30 $\pm$ 0.14 & 0.97 $\pm$ 0.03 & 0.63 $\pm$ 0.33 \\
0.90 & 10 & 10 & 1.2 & 0.47 $\pm$ 0.00 & 0.22 $\pm$ 0.00 & 1.00 $\pm$ 0.00 & 0.97 $\pm$ 0.00 & 0.54 $\pm$ 0.11 & 0.31 $\pm$ 0.13 & 0.97 $\pm$ 0.03 & 0.64 $\pm$ 0.33 \\
0.90 & 10 & 200 & 0.3 & 0.02 $\pm$ 0.00 & 0.01 $\pm$ 0.00 & 1.00 $\pm$ 0.00 & 1.00 $\pm$ 0.00 & 0.29 $\pm$ 0.30 & 0.22 $\pm$ 0.23 & 0.97 $\pm$ 0.02 & 0.67 $\pm$ 0.34 \\
0.90 & 10 & 200 & 0.7 & 0.07 $\pm$ 0.00 & 0.05 $\pm$ 0.00 & 1.00 $\pm$ 0.00 & 1.00 $\pm$ 0.00 & 0.30 $\pm$ 0.28 & 0.23 $\pm$ 0.21 & 0.97 $\pm$ 0.03 & 0.67 $\pm$ 0.34 \\
0.90 & 10 & 200 & 1.2 & 0.06 $\pm$ 0.00 & 0.05 $\pm$ 0.00 & 1.00 $\pm$ 0.00 & 1.00 $\pm$ 0.00 & 0.30 $\pm$ 0.28 & 0.24 $\pm$ 0.22 & 0.97 $\pm$ 0.02 & 0.67 $\pm$ 0.34 \\
0.90 & 30 & 10 & 0.3 & 0.16 $\pm$ 0.00 & 0.07 $\pm$ 0.00 & 1.00 $\pm$ 0.00 & 0.95 $\pm$ 0.00 & 0.36 $\pm$ 0.25 & 0.22 $\pm$ 0.18 & 0.97 $\pm$ 0.03 & 0.55 $\pm$ 0.37 \\
0.90 & 30 & 10 & 0.7 & 0.33 $\pm$ 0.00 & 0.12 $\pm$ 0.00 & 1.00 $\pm$ 0.00 & 0.94 $\pm$ 0.00 & 0.44 $\pm$ 0.17 & 0.25 $\pm$ 0.15 & 0.97 $\pm$ 0.03 & 0.54 $\pm$ 0.37 \\
0.90 & 30 & 10 & 1.2 & 0.45 $\pm$ 0.00 & 0.17 $\pm$ 0.00 & 1.00 $\pm$ 0.00 & 0.93 $\pm$ 0.00 & 0.53 $\pm$ 0.11 & 0.27 $\pm$ 0.13 & 0.98 $\pm$ 0.03 & 0.53 $\pm$ 0.36 \\
0.90 & 30 & 200 & 0.3 & 0.03 $\pm$ 0.00 & 0.02 $\pm$ 0.00 & 1.00 $\pm$ 0.00 & 1.00 $\pm$ 0.00 & 0.30 $\pm$ 0.30 & 0.21 $\pm$ 0.21 & 0.97 $\pm$ 0.02 & 0.56 $\pm$ 0.39 \\
0.90 & 30 & 200 & 0.7 & 0.06 $\pm$ 0.00 & 0.03 $\pm$ 0.00 & 1.00 $\pm$ 0.00 & 0.99 $\pm$ 0.00 & 0.32 $\pm$ 0.29 & 0.22 $\pm$ 0.21 & 0.98 $\pm$ 0.02 & 0.55 $\pm$ 0.39 \\
0.90 & 30 & 200 & 1.2 & 0.08 $\pm$ 0.00 & 0.05 $\pm$ 0.00 & 1.00 $\pm$ 0.00 & 0.98 $\pm$ 0.00 & 0.32 $\pm$ 0.28 & 0.22 $\pm$ 0.20 & 0.97 $\pm$ 0.02 & 0.55 $\pm$ 0.39 \\
\bottomrule
\end{tabular}
}

\expandafter\providecommand\csname TaskCorr\endcsname{%
\begin{tabular}{@{}cccccccccccc@{}}
\toprule
\multirow{2}{*}{$1-\alpha_{tsk}$} & \multirow{2}{*}{\# Models} & \multirow{2}{*}{\# Base values} & \multirow{2}{*}{$\sigma$} & \multicolumn{4}{c}{No ties} & \multicolumn{4}{c}{With ties} \\
\cmidrule(lr){5-8}\cmidrule(lr){9-12}
 &  &  &  & Width FWER & Width Bootstrap & Coverage FWER & Coverage Bootstrap & Width FWER & Width Bootstrap & Coverage FWER & Coverage Bootstrap \\
\midrule
0.98 & 10 & 10 & 0.3 & 0.26 $\pm$ 0.04 & 0.14 $\pm$ 0.02 & 1.00 $\pm$ 0.00 & 0.99 $\pm$ 0.00 & 0.40 $\pm$ 0.17 & 0.31 $\pm$ 0.18 & 0.99 $\pm$ 0.01 & 0.79 $\pm$ 0.18 \\
0.98 & 10 & 10 & 0.7 & 0.48 $\pm$ 0.04 & 0.25 $\pm$ 0.02 & 1.00 $\pm$ 0.00 & 0.98 $\pm$ 0.01 & 0.56 $\pm$ 0.08 & 0.37 $\pm$ 0.13 & 0.99 $\pm$ 0.01 & 0.79 $\pm$ 0.18 \\
0.98 & 10 & 10 & 1.2 & 0.62 $\pm$ 0.04 & 0.32 $\pm$ 0.02 & 1.00 $\pm$ 0.00 & 0.99 $\pm$ 0.00 & 0.67 $\pm$ 0.07 & 0.41 $\pm$ 0.11 & 1.00 $\pm$ 0.00 & 0.79 $\pm$ 0.18 \\
0.98 & 10 & 200 & 0.3 & 0.04 $\pm$ 0.02 & 0.03 $\pm$ 0.02 & 1.00 $\pm$ 0.00 & 1.00 $\pm$ 0.00 & 0.30 $\pm$ 0.25 & 0.27 $\pm$ 0.23 & 0.99 $\pm$ 0.00 & 0.87 $\pm$ 0.13 \\
0.98 & 10 & 200 & 0.7 & 0.10 $\pm$ 0.03 & 0.08 $\pm$ 0.02 & 1.00 $\pm$ 0.00 & 1.00 $\pm$ 0.00 & 0.32 $\pm$ 0.23 & 0.29 $\pm$ 0.21 & 0.99 $\pm$ 0.00 & 0.87 $\pm$ 0.13 \\
0.98 & 10 & 200 & 1.2 & 0.09 $\pm$ 0.02 & 0.07 $\pm$ 0.02 & 1.00 $\pm$ 0.00 & 1.00 $\pm$ 0.00 & 0.32 $\pm$ 0.23 & 0.29 $\pm$ 0.21 & 0.99 $\pm$ 0.00 & 0.87 $\pm$ 0.12 \\
0.98 & 30 & 10 & 0.3 & 0.21 $\pm$ 0.01 & 0.09 $\pm$ 0.01 & 1.00 $\pm$ 0.00 & 0.99 $\pm$ 0.00 & 0.38 $\pm$ 0.19 & 0.28 $\pm$ 0.18 & 0.99 $\pm$ 0.00 & 0.62 $\pm$ 0.30 \\
0.98 & 30 & 10 & 0.7 & 0.42 $\pm$ 0.03 & 0.17 $\pm$ 0.01 & 1.00 $\pm$ 0.00 & 0.98 $\pm$ 0.00 & 0.51 $\pm$ 0.11 & 0.31 $\pm$ 0.15 & 0.99 $\pm$ 0.00 & 0.61 $\pm$ 0.29 \\
0.98 & 30 & 10 & 1.2 & 0.57 $\pm$ 0.05 & 0.24 $\pm$ 0.02 & 1.00 $\pm$ 0.00 & 0.98 $\pm$ 0.00 & 0.62 $\pm$ 0.07 & 0.35 $\pm$ 0.13 & 1.00 $\pm$ 0.00 & 0.62 $\pm$ 0.29 \\
0.98 & 30 & 200 & 0.3 & 0.03 $\pm$ 0.01 & 0.02 $\pm$ 0.01 & 1.00 $\pm$ 0.00 & 1.00 $\pm$ 0.00 & 0.31 $\pm$ 0.25 & 0.27 $\pm$ 0.22 & 0.99 $\pm$ 0.00 & 0.65 $\pm$ 0.29 \\
0.98 & 30 & 200 & 0.7 & 0.07 $\pm$ 0.01 & 0.04 $\pm$ 0.01 & 1.00 $\pm$ 0.00 & 1.00 $\pm$ 0.00 & 0.32 $\pm$ 0.24 & 0.28 $\pm$ 0.21 & 0.99 $\pm$ 0.00 & 0.65 $\pm$ 0.29 \\
0.98 & 30 & 200 & 1.2 & 0.09 $\pm$ 0.02 & 0.06 $\pm$ 0.01 & 1.00 $\pm$ 0.00 & 1.00 $\pm$ 0.00 & 0.33 $\pm$ 0.23 & 0.28 $\pm$ 0.21 & 0.99 $\pm$ 0.00 & 0.65 $\pm$ 0.29 \\
\midrule
0.95 & 10 & 10 & 0.3 & 0.22 $\pm$ 0.03 & 0.12 $\pm$ 0.02 & 1.00 $\pm$ 0.00 & 0.99 $\pm$ 0.01 & 0.38 $\pm$ 0.19 & 0.28 $\pm$ 0.17 & 0.98 $\pm$ 0.01 & 0.71 $\pm$ 0.24 \\
0.95 & 10 & 10 & 0.7 & 0.40 $\pm$ 0.04 & 0.22 $\pm$ 0.02 & 1.00 $\pm$ 0.00 & 0.97 $\pm$ 0.01 & 0.50 $\pm$ 0.11 & 0.33 $\pm$ 0.13 & 0.98 $\pm$ 0.01 & 0.71 $\pm$ 0.23 \\
0.95 & 10 & 10 & 1.2 & 0.52 $\pm$ 0.04 & 0.26 $\pm$ 0.02 & 1.00 $\pm$ 0.00 & 0.98 $\pm$ 0.00 & 0.60 $\pm$ 0.09 & 0.36 $\pm$ 0.11 & 0.99 $\pm$ 0.01 & 0.71 $\pm$ 0.23 \\
0.95 & 10 & 200 & 0.3 & 0.03 $\pm$ 0.02 & 0.02 $\pm$ 0.02 & 1.00 $\pm$ 0.00 & 1.00 $\pm$ 0.00 & 0.29 $\pm$ 0.25 & 0.25 $\pm$ 0.21 & 0.98 $\pm$ 0.01 & 0.77 $\pm$ 0.21 \\
0.95 & 10 & 200 & 0.7 & 0.09 $\pm$ 0.03 & 0.06 $\pm$ 0.02 & 1.00 $\pm$ 0.00 & 1.00 $\pm$ 0.00 & 0.31 $\pm$ 0.23 & 0.27 $\pm$ 0.20 & 0.99 $\pm$ 0.01 & 0.77 $\pm$ 0.21 \\
0.95 & 10 & 200 & 1.2 & 0.08 $\pm$ 0.02 & 0.06 $\pm$ 0.02 & 1.00 $\pm$ 0.00 & 1.00 $\pm$ 0.00 & 0.31 $\pm$ 0.23 & 0.27 $\pm$ 0.20 & 0.99 $\pm$ 0.01 & 0.77 $\pm$ 0.21 \\
0.95 & 30 & 10 & 0.3 & 0.18 $\pm$ 0.01 & 0.08 $\pm$ 0.01 & 1.00 $\pm$ 0.00 & 0.98 $\pm$ 0.01 & 0.37 $\pm$ 0.20 & 0.25 $\pm$ 0.16 & 0.99 $\pm$ 0.01 & 0.57 $\pm$ 0.31 \\
0.95 & 30 & 10 & 0.7 & 0.36 $\pm$ 0.03 & 0.14 $\pm$ 0.01 & 1.00 $\pm$ 0.00 & 0.97 $\pm$ 0.00 & 0.47 $\pm$ 0.13 & 0.28 $\pm$ 0.14 & 0.99 $\pm$ 0.01 & 0.57 $\pm$ 0.31 \\
0.95 & 30 & 10 & 1.2 & 0.50 $\pm$ 0.04 & 0.20 $\pm$ 0.02 & 1.00 $\pm$ 0.00 & 0.96 $\pm$ 0.01 & 0.56 $\pm$ 0.09 & 0.31 $\pm$ 0.12 & 0.99 $\pm$ 0.01 & 0.57 $\pm$ 0.31 \\
0.95 & 30 & 200 & 0.3 & 0.03 $\pm$ 0.01 & 0.02 $\pm$ 0.00 & 1.00 $\pm$ 0.00 & 1.00 $\pm$ 0.00 & 0.30 $\pm$ 0.25 & 0.24 $\pm$ 0.20 & 0.99 $\pm$ 0.01 & 0.57 $\pm$ 0.32 \\
0.95 & 30 & 200 & 0.7 & 0.06 $\pm$ 0.01 & 0.04 $\pm$ 0.01 & 1.00 $\pm$ 0.00 & 1.00 $\pm$ 0.00 & 0.32 $\pm$ 0.24 & 0.25 $\pm$ 0.20 & 0.99 $\pm$ 0.01 & 0.58 $\pm$ 0.33 \\
0.95 & 30 & 200 & 1.2 & 0.08 $\pm$ 0.01 & 0.05 $\pm$ 0.01 & 1.00 $\pm$ 0.00 & 0.99 $\pm$ 0.00 & 0.33 $\pm$ 0.24 & 0.25 $\pm$ 0.19 & 0.99 $\pm$ 0.01 & 0.57 $\pm$ 0.32 \\
\midrule
0.90 & 10 & 10 & 0.3 & 0.19 $\pm$ 0.03 & 0.10 $\pm$ 0.02 & 1.00 $\pm$ 0.00 & 0.98 $\pm$ 0.01 & 0.36 $\pm$ 0.20 & 0.25 $\pm$ 0.15 & 0.97 $\pm$ 0.02 & 0.64 $\pm$ 0.28 \\
0.90 & 10 & 10 & 0.7 & 0.34 $\pm$ 0.03 & 0.18 $\pm$ 0.02 & 0.99 $\pm$ 0.00 & 0.96 $\pm$ 0.02 & 0.46 $\pm$ 0.13 & 0.29 $\pm$ 0.12 & 0.97 $\pm$ 0.02 & 0.64 $\pm$ 0.27 \\
0.90 & 10 & 10 & 1.2 & 0.45 $\pm$ 0.04 & 0.22 $\pm$ 0.02 & 1.00 $\pm$ 0.00 & 0.96 $\pm$ 0.01 & 0.54 $\pm$ 0.10 & 0.31 $\pm$ 0.11 & 0.97 $\pm$ 0.02 & 0.64 $\pm$ 0.27 \\
0.90 & 10 & 200 & 0.3 & 0.03 $\pm$ 0.02 & 0.02 $\pm$ 0.01 & 1.00 $\pm$ 0.00 & 1.00 $\pm$ 0.00 & 0.29 $\pm$ 0.25 & 0.23 $\pm$ 0.19 & 0.97 $\pm$ 0.02 & 0.67 $\pm$ 0.28 \\
0.90 & 10 & 200 & 0.7 & 0.08 $\pm$ 0.02 & 0.05 $\pm$ 0.02 & 1.00 $\pm$ 0.00 & 1.00 $\pm$ 0.00 & 0.31 $\pm$ 0.23 & 0.24 $\pm$ 0.18 & 0.97 $\pm$ 0.02 & 0.67 $\pm$ 0.28 \\
0.90 & 10 & 200 & 1.2 & 0.07 $\pm$ 0.02 & 0.05 $\pm$ 0.02 & 1.00 $\pm$ 0.00 & 1.00 $\pm$ 0.00 & 0.31 $\pm$ 0.23 & 0.24 $\pm$ 0.18 & 0.97 $\pm$ 0.02 & 0.67 $\pm$ 0.28 \\
0.90 & 30 & 10 & 0.3 & 0.16 $\pm$ 0.01 & 0.06 $\pm$ 0.01 & 1.00 $\pm$ 0.00 & 0.96 $\pm$ 0.01 & 0.36 $\pm$ 0.21 & 0.22 $\pm$ 0.15 & 0.97 $\pm$ 0.02 & 0.55 $\pm$ 0.31 \\
0.90 & 30 & 10 & 0.7 & 0.32 $\pm$ 0.03 & 0.12 $\pm$ 0.01 & 1.00 $\pm$ 0.00 & 0.94 $\pm$ 0.01 & 0.44 $\pm$ 0.15 & 0.24 $\pm$ 0.13 & 0.98 $\pm$ 0.02 & 0.54 $\pm$ 0.30 \\
0.90 & 30 & 10 & 1.2 & 0.45 $\pm$ 0.04 & 0.16 $\pm$ 0.02 & 1.00 $\pm$ 0.00 & 0.93 $\pm$ 0.01 & 0.53 $\pm$ 0.10 & 0.27 $\pm$ 0.11 & 0.98 $\pm$ 0.02 & 0.54 $\pm$ 0.30 \\
0.90 & 30 & 200 & 0.3 & 0.03 $\pm$ 0.01 & 0.02 $\pm$ 0.00 & 1.00 $\pm$ 0.00 & 0.99 $\pm$ 0.00 & 0.30 $\pm$ 0.25 & 0.21 $\pm$ 0.18 & 0.98 $\pm$ 0.02 & 0.56 $\pm$ 0.32 \\
0.90 & 30 & 200 & 0.7 & 0.05 $\pm$ 0.01 & 0.03 $\pm$ 0.00 & 1.00 $\pm$ 0.00 & 0.99 $\pm$ 0.00 & 0.31 $\pm$ 0.24 & 0.22 $\pm$ 0.17 & 0.98 $\pm$ 0.02 & 0.56 $\pm$ 0.32 \\
0.90 & 30 & 200 & 1.2 & 0.08 $\pm$ 0.01 & 0.04 $\pm$ 0.01 & 1.00 $\pm$ 0.00 & 0.99 $\pm$ 0.01 & 0.32 $\pm$ 0.24 & 0.22 $\pm$ 0.17 & 0.98 $\pm$ 0.02 & 0.56 $\pm$ 0.32 \\
\bottomrule
\end{tabular}
}

\begin{table}[ht]
    \centering
        \caption{Comparison of normalized width and coverage of task-level CIs, without correlations between models.}
            \label{tab:task_no_corr}
        \resizebox{\textwidth}{!}{%
            \csname TaskNoCorr\endcsname
        }
\end{table}

\begin{table}[ht]
    \centering
        \caption{Comparison of normalized width and coverage of task-level CIs, with correlations between models.}
            \label{tab:task_corr}
        \resizebox{\textwidth}{!}{%
            \csname TaskCorr\endcsname
        }
\end{table}
\section{Real data applications - data description}\label{app:use_case_info}

\subsection{TabArena}\label{app:tabarena_info}

TabArena is a tabular data benchmarking framework open for submissions of new prediction models and datasets~\cite{erickson2025tabarena}. Performance is measured using the Elo rating system. The paper's results are available via the TabArena Python API~\footnote{https://github.com/autogluon/tabrepo}. The analysis presented in this study, are based on the results of the TabArena paper, and not on the current version of the leaderboard.

For each dataset, there are 9 or 30 folds, representing repeated 3-fold cross-validation with 3 or 10 repeats, respectively. For datasets with fewer than 2500 observations (34 datasets), the authors used 10 times repeated 3-fold outer cross-validation; for all other datasets, three repeats (17 datasets).
The benchmarks include regression, classification, and multi-class classification problems, with RMSE, ROC-AUC, and Log-Loss as the loss metrics, respectively. Additional metrics are calculated for all datasets, including train time and prediction time.

\subsection{MMLU and PromtEval}\label{app:mmlu_info}

The MMLU benchmark~\cite{hendryckstest2021} consists of multiple-choice questions across 57 subjects spanning STEM, humanities, and social sciences, with at least 100 questions per subject. It is used to measure LLMs' knowledge and problem-solving abilities. PromptEval~\cite{polo2024efficient} is a benchmark that was built upon MMLU subjects and questions, and consists of 100 prompt variations for each question with the performance of 15 LLMs on each subject, question, and prompt variant. It is publicly available (MIT license) to download from Hugging Face~\footnote {\href{https://huggingface.co/datasets/PromptEval/PromptEval_MMLU_correctness}{PromptEval dataset card}}.

For the analysis in this paper, we averaged across prompt variants to obtain an accuracy score for each model, for each question, and for each subject. As PromptEval was proposed to assess the model's sensitivity to prompt variation, it introduces two high-level sources of variability: variability across prompts and across questions. Our framework can be used to quantify both types of variability, depending on how the observed performance scores are defined.
Formally, denote the set of subjects (tasks), models, prompt variants, and questions by $Q=(T, C, V, Y)$, and by $s_{ij}^b(v_l)$ the binary score of a model $c_j$, on a subject $t_b$, a question $y_i$, and a prompt $v_l$.
The prompt-based performance score for model $c_j$:
    \begin{equation}\label{eq:avg_prompt}
    \begin{aligned}
        \basematrix^b_j(V) = \frac{1}{n_b}\sum_{i=1}^{n_b}s_{ij}^b(v_l)
    \end{aligned}
\end{equation}
where $n_b$ is the number of questions for subject $t_b$.

The question-based performance score for model $c_j$:
    \begin{equation}\label{eq:avg_example}
    \begin{aligned}
        \basematrix^b_j(Y) = \frac{1}{n_V}\sum_{l=v}^{n_V}y_{ij}^b(v_l)
    \end{aligned}
\end{equation}
where $n_V$ is the number of prompt variations for subject $t_b$, which is fixed to 100 across all subjects.

Equations~\ref{eq:avg_prompt} and~\ref{eq:avg_example} are two forms of base values that we used as input to construct the subject-level rank CIs. The width of the CIs quantifies the differences between models in sensitivity to prompt variations or in the variability within questions of the same subject.

\end{document}